\documentclass[draftclsnofoot, 12pt, onecolumn]{IEEEtran}
\usepackage{mathrsfs}
\usepackage{bbm}
\usepackage{amsmath}
\usepackage{acronym}  
\usepackage[dvips]{color}
\usepackage{epsf}
\usepackage{times}
\usepackage{epsfig}
\usepackage{notoccite}
\usepackage{graphicx}
\usepackage{epstopdf}
\usepackage{pstricks}
\usepackage{amssymb}
\usepackage{amsthm}
\usepackage{amsxtra}
\usepackage{here}
\usepackage{rawfonts}
\usepackage{float}
\usepackage{times}
\usepackage{url}
\usepackage{cite}
\usepackage{caption}
\usepackage{subcaption}
\usepackage{blindtext}
\usepackage{enumitem}
\usepackage{xcolor,cite,etoolbox}
\usepackage{relsize}
\usepackage{siunitx}
\usepackage{lipsum}
\usepackage{graphicx}
\usepackage{tabularx}
\usepackage{xparse}
\usepackage{array}
\newcolumntype{P}[1]{>{\centering\arraybackslash}p{#1}}
\usepackage{mleftright}
\newtheorem{remark}{Remark}
\usepackage{mathtools}

\usepackage{graphics}
\usepackage{physics}
\usepackage{amssymb}
\usepackage{multicol}

\usepackage[nomain,acronym,shortcuts]{glossaries}
\makeglossaries
\newcommand*{\acro}[3][]{\newacronym[#1]{#2}{#2}{#3}}
\usepackage{datetime}
\usepackage{amssymb}
\usepackage{subcaption}
\usepackage{verbatim}

\newtheorem{corollary}{Corollary}
\newtheorem{theorem}{\bf Theorem}

\newtheorem{proposition}{\bf Proposition}
\newtheorem{lemma}{\bf Lemma}

\newtheorem{definition}{\bf Definition}

\IEEEoverridecommandlockouts
\usepackage{cite}
\usepackage{amsmath,amssymb,amsfonts}
\usepackage{algorithmic}
\usepackage{graphicx}
\usepackage{textcomp}
\usepackage{xcolor}
\usepackage{mathtools}
\usepackage{gensymb}
\usepackage{optidef}
\usepackage[linesnumbered,ruled,vlined]{algorithm2e}
\SetKwInput{KwInput}{Input}
\SetKwInput{KwOutput}{Output}
\SetKwInput{KwInitialize}{Initialize}
\usepackage{float}

\begin{document}

\acro{QoNE}{quality-of-network experience}
\acro{D2D}{device-to-device}
\acro{ToM}{theory of mind}
\acro{JEPA}{joint embedding and predictive architecture}
\acro{NN}{neural network}
\acro{KPI}{key performance indicator}
\acro{KL}{Kullback-Leibler}
\acro{NeSy}{neuro-symbolic}

\acro{HDC}{hyper-dimensional computing}
\acro{SCM}{structural causal model}
\acro{CRL}{causal representation learning}
\acro{SIR}{signal-to-interference-ratio}
\acro{SINR}{signal-to-interference-plus-noise-ratio}
\acro{PCP}{Poisson cluster process}
\acro{CoMP}{coordinated multi-point}
\acro{BS}{base station} 
\acro{MD-CoMP}{macrodiversity CoMP transmission}
\acro{MAC}{medium-access-control}
\acro{JT-CoMP}{joint transmission CoMP}
\acro{CoMP-JT}{coordinated multipoint joint transmission}
\acro{SBS}{small base station}
\acro{MDSD}{multiple devices to a single device}
\acro{MDS}{maximum distance separable}
\acro{SCN}{small cell network}
\acro{PPP}{Poisson point process}
\acro{TCP}{Thomas cluster process}
\acro{CSI}{channel state information}
\acro{PDF}{probability distribution function}
\acro{PMF}{probability mass function}
\acro{RV}{random variable}
\acro{i.i.d.}{independently and identically distributed}
\acro{MBMS}{multimedia broadcasting multicasting service}
\acro{EE}{energy efficiency}
\acro{HCP}{hard-core placement}
\acro{CCDF}{complementary cumulative distribution function}
\acro{CDF}{cumulative distribution function}
\acro{PC}{probabilistic caching}
\acro{RC}{random caching}
\acro{CPF}{caching popular files} 
\acro{PGFL}{probability generating functional}
\acro{KKT}{Karush-Kuhn-Tucker}
\acro{PGF}{point generating function}
\acro{SCA}{successive convex approximation}
\acro{FHD}{full-high-definition}
\acro{UHD}{ultra-high-definition}
\acro{VR}{virtual reality}
\acro{AR}{augmented reality}
\acro{5G}{fifth-generation}
\acro{QoS}{quality-of-service}
\acro{QoE}{quality-of-experience}
\acro{IoT}{Internet of Things}
\acro{MHCPP}{Matern hardcore point process}
\acro{LoS}{line-of-sight}
\acro{NLoS}{non-line-of-sight}
\acro{PSD}{power spectral density}
\acro{MEC}{mobile edge computing}
\acro{E2E}{end-to-end}
\acro{THz}{terahertz}
\acro{CLT}{central limit theorem}
\acro{HQ}{High Quality}
\acro{eMBB}{enhanced mobile broadband}
\acro{URLLC}{ultra reliable low latency communications}
\acro{mmWave}{millimeter wave}
\acro{EVT}{extreme value theory}
\acro{GEV}{generalized extreme value}
\acro{LIS}{large intelligent surface}
\acro{RIS}{reconfigurable intelligent surface}
\acro{RF}{radio frequency}
\acro{UE}{user equipment}
\acro{MIMO}{multiple-input multiple-output}
\acro{EVaR}{entropic value-at-risk}
\acro{DNN}{deep neural network}
\acro{MDP}{Markov decision process}
\acro{RL}{reinforcement learning}
\acro{RNN}{recurrent neural network}
\acro{ANN}{artificial neural networks}
\acro{LSTM}{long short-term memory}
\acro{ReLu}{rectified linear unit}
\acro{VaR}{value-at-risk}
\acro{SNR}{signal-to-noise ratio}
\acro{AoSA}{array of subarray}
\acro{XR}{extended reality}
\acro{AoA}{angle of arrival}
\acro{ULA}{uniform linear array}
\acro{AoD}{angle of departure}
\acro{EM}{electromagnetic}
\acro{HRLLC}{s high-rate and high-reliability low latency communications}
\acro{6DoF}{six degrees of freedom}
\acro{MR}{mixed reality}
\acro{PAPR}{peak to average power ratio}
\acro{OFDM}{orthogonal frequency-division multiplexing}
\acro{OFDMA}{orthogonal frequency-division multiple access}
\acro{SC-FDM}{single carrier frequency-division multiplexing}
\acro{ToA}{time of arrival}
\acro{MUSIC}{multiple signal classification}
\acro{IoE}{Internet of Everything}
\acro{DT}{digital twin}
\acro{PT}{physical twin}
\acro{CT}{cyber twin}
\acro{DRL}{deep reinforcement learning}
\acro{FL}{federated learning}
\acro{DL}{deep learning}
\acro{CRAS}{connected robotics and autonomous system}
\acro{CL}{continual learning}
\acro{MSE}{mean squared error}
\acro{EWC}{elastic weight consolidation}
\acro{ML}{machine learning}
\acro{GD}{gradient descent}
\acro{MLP}{multi layer perceptron}
\acro{TL}{transfer learning}
\acro{AI}{artificial intelligence}
\acro{NFT}{non fungible token}
\acro{H2A}{human-to-avatar}
\acro{A2A}{avatar-to-avatar}
\acro{UAV}{unmanned aerial vehicle}
\acro{NTN}{non-terrestrial networks}
\acro{CIS}{connected intelligence system}
\acro{QoVE}{quality-of-virtual experience}
\acro{OOD}{out-of-distribution}
\acro{RAN}{radio access network}
\acro{PVD}{physical-virtual-digital}
\acro{Tx}{transmitter}
\acro{Rx}{receiver}
\acro{RRM}{radio resource management}
\acro{AGI}{artificial general intelligence}
\acro{LLM}{large language model}
\acro{FM}{foundation model}
\acro{QoDE}{quality-of-digital experience}
\acro{QoPE}{quality-of-physical experience}
\acro{POMDP}{partially observable Markov decision process}
\acro{HD}{hyper-dimensional}
\acro{IIT}{integrated information theory}
\acro{DAG}{directed acyclic graph}
\acro{GNN}{graph neural network}
\acro{XAI}{explainable learning}
\acro{PFC}{pre-frontal cortex}
\acro{BG}{basal ganglia}
\acro{VFE}{variational free energy}
\acro{EFE}{expected free energy}
\acro{MFA}{mean field approximation}
\acro{VMP}{variational message passing}
\acro{NESS}{non-equilibrium steady state}

\title{\centering Active Inference as the Test-Time Scaling Law for Physical AI Agents
\thanks{O. Hashash and W. Saad are with the Bradley Department of Electrical and Computer Engineering and the Institute for Advanced Computing, Virginia Tech, Alexandria, VA, USA. Emails: \protect{omarnh@vt.edu, walids@vt.edu.}}
\thanks{C.K. Thomas is with the Department of Electrical and Computer Engineering, Worcester Polytechnic Institute, Worcester, MA, USA. Email: \protect{cthomas2@wpi.edu.}}
\thanks{M. Debbah is with is with Khalifa University of Science and Technology, Abu Dhabi 127788, United Arab Emirates, and also with the CentraleSupelec, University Paris Saclay, 91192 Gif-sur-Yvette, France. E-mail: \protect{merouane.debbah@ku.ac.ae}.}
\thanks{K. Friston is with the Queen Square Institute of Neurology, University College London, London, UK. Email: \protect{k.friston@ucl.ac.uk}}
\thanks{A. Razi is with the Turner Institute for Brain and Mental Health at School of Psychological Sciences and Monash Biomedical Imaging, Monash University, Clayton, Australia and CIFAR Global Scholars Program, Toronto, Canada and Queen Square Institute of Neurology, University College London, London, UK. Email: adeel.razi@monash.edu.}}%

\author{\normalsize Omar~Hashash,~\IEEEmembership{\normalsize Member,~IEEE,} Christo~Kurisummoottil~Thomas,~\IEEEmembership{\normalsize Senior~Member,~IEEE,}
Walid~Saad,~\IEEEmembership{\normalsize Fellow,~IEEE,}
M{\'e}rouane~Debbah,~\IEEEmembership{\normalsize Fellow,~IEEE,}
Karl~Friston,
and~Adeel~Razi,~\IEEEmembership{\normalsize Senior~Member,~IEEE} 

\vspace{-1cm}}%
\maketitle

\vspace{-0.9cm}

\begin{abstract}
In this paper, a novel \emph{test-time scaling law for physical artificial intelligence (AI) agents} is introduced. This scaling law enables physical AI agents to \emph{reason with their world models so as to generalize in unforeseen scenarios} that appear at test time. In particular, the derived scaling law is grounded in the first principle of \emph{active inference} that equips physical AI agents with the general objective to \emph{survive} in the real world under which their specific, narrow task objectives are subsumed. Active inference achieves this by providing the reasoning necessary to resolve the \emph{prediction errors} that arise when the agent encounters unforeseen situations outside its training distribution, enabling physical AI agents to generalize in new situations that appear in non-stationary environments.
The proposed scaling law captures this generalization by dynamically updating the physical AI agent's policy with this reasoning capability at test time. This policy update is then modeled as a soft Bayesian inference process in which the beliefs about the policy are updated using the reasoning that reduces the prediction errors (i.e., surprise) expected under allowable policies. Notably, the resulting posterior policy admits a biologically plausible interpretation which recovers the scaling mechanism that engages the brain's basal ganglia and prefrontal cortex at test time. To solve this analytically intractable inference problem, a variational inference solution that minimizes free energy bounds is developed to reduce prediction error, and the proposed framework is further extended to enable \emph{learning} beyond training by reinforcing new instances, resolved at test time, in both the policy and world model. This approach is shown to enable physical AI agents to \emph{continuously} learn through real-world deployment. Unlike existing scaling laws that remain constrained by factors such as model size and training data, the derived solution ultimately \emph{scales with the continuous experience} of a physical AI agent in the real world. Simulation results in the context of an autonomous driving task demonstrate that the proposed solution outperforms model-free Q-learning and model-based Bayesian reinforcement learning, by achieving robust generalization to unforeseen scenarios while improving inference efficiency by over $36\%$.
\vspace{-0.4cm}
\end{abstract}

\begin{IEEEkeywords}
 World model, active inference, free energy, surprise, reasoning, planning, test-time scaling law.
\end{IEEEkeywords}

\vspace{-0.4cm}

\section{Introduction}
\indent Physical \ac{AI} agents such as humanoid robots and autonomous vehicles inevitably encounter \emph{unforeseen}\footnote{An unforeseen scenario refers to an unexpected, unanticipated, or unfamiliar state in which the \ac{AI} system has limited or no prior experience to handle properly.} situations when operating in the physical world~\cite{saad2025artificial}. This is due to the dynamic, non-stationary nature of the real world and its infinite state space that prevents the agent from fully experiencing it. In consequence, the experience of a physical \ac{AI} agent about the world remains limited to its training phase. This, in turn, explains why physical \ac{AI} agents often fail when they experience new, unforeseen scenarios \emph{at test time}.
Recent real-world examples of such failures include the December 2025 Waymo power outage incident in San Francisco, where robotaxis encountered non-functioning traffic lights -- an unforeseen scenario that caused multiple vehicles to stop abruptly mid-intersection, forcing Waymo to suspend its service entirely~\cite{ding2025waymo}.

To resolve the situations that unfold beyond their training domain, physical \ac{AI} agents must then rely on their \emph{reasoning} at test time. This reasoning about the world can be facilitated through \emph{world models} which are an essential cognitive component that bring forth an understanding of the physical world in terms of its real-time state, causal structures, and dynamic evolution~\cite{ha2018world}. Accordingly, reasoning typically leverages world models to simulate counterfactual scenarios and \emph{plan} future states of the world while optimizing the cost of the actions\footnote{In the field of psychology, this is also known as System 2 inference~\cite{kahneman2011thinking}.}~\cite{LeCun2022OR}. Ultimately, the goal of this reasoning is to drive the \ac{AI} agent to generalize in these new situations. However, while reasoning indeed paves the way for potentially optimizing the actions of the \ac{AI} agent in unforeseen scenarios, it does not inherently account for generalization. Such generalization in the physical world would still require merging this reasoning about the world\footnote{This is often known as common-sense reasoning.} with the narrow task knowledge of the \ac{AI} agent.
In other words, achieving such generalization requires a solution that expands the accumulated experience of the physical \ac{AI} agent with these reasoning capabilities to optimally handle unforeseen scenarios. If achieved, such a solution can lead to a \emph{test-time scaling law for physical \ac{AI} agents} that need to generalize in unforeseen scenarios, analogous to~\ac{LLM} agents that must reason to improve their response to harder prompts. \emph{Therefore, this paper seeks to uncover how reasoning with world models can enable physical \ac{AI} agents to generalize in unforeseen scenarios, while demonstrating how such a solution, when derived from first principles, can ultimately serve as their test-time scaling law.}


The first step to devise a solution to the physical \ac{AI} generalization problem is to draw a rigorous analogy with humans who generalize robustly and intuitively in unforeseen scenarios. Humans possess a remarkable ability to detect unforeseen scenarios in the world, which often triggers a natural response known as \emph{surprise}. In particular, this surprise reflects the mismatch between the world as \emph{perceived} and its expected state predicted under one's internal world model~\cite{rao1999predictive}. Indeed, humans anticipate their sensory input and any corresponding \emph{prediction error} signals the presence of an unforeseen scenario that deviates from their accumulated experience, prompting them to engage in deliberate reasoning to resolve their surprise. For humans, resolving prediction errors thus implies restoring consistency between predictions under their internal world model and input from the world.
Humans typically achieve this by transitioning back to their \emph{preferred} states, from which they can predict prospective transitions and recover their desired sensory input. Clearly, in this situation, reasoning plays a key role in enabling humans to  \emph{actively} drive the physical world closer to their preferences. In other words, here, the purpose of reasoning is to take actions that can eliminate the source of surprise, i.e., ``the prediction error". The mechanism that formalizes this key process is called \emph{active inference}~\cite{friston2017active}, which models human perception as an inference process shaped by such deliberate reasoning and actions.
Thus far, the framework of active inference has been used to study how humans detect and reason about unforeseen scenarios through a control of their prediction errors. More broadly, active inference provides a first-principles account of how all living systems must exhibit this form of reasoning to \emph{survive} in the real world and ensure their existence~\cite{goldbeter2018dissipative}. \emph{Remarkably, the absence of this form of ``biological" reasoning at test time in physical \ac{AI} agents, to date, might possibly explain their failures and limited capacity to survive in the real world~\cite{de2026active}.}

\begin{figure}
	\centering
	\includegraphics[width=\columnwidth]{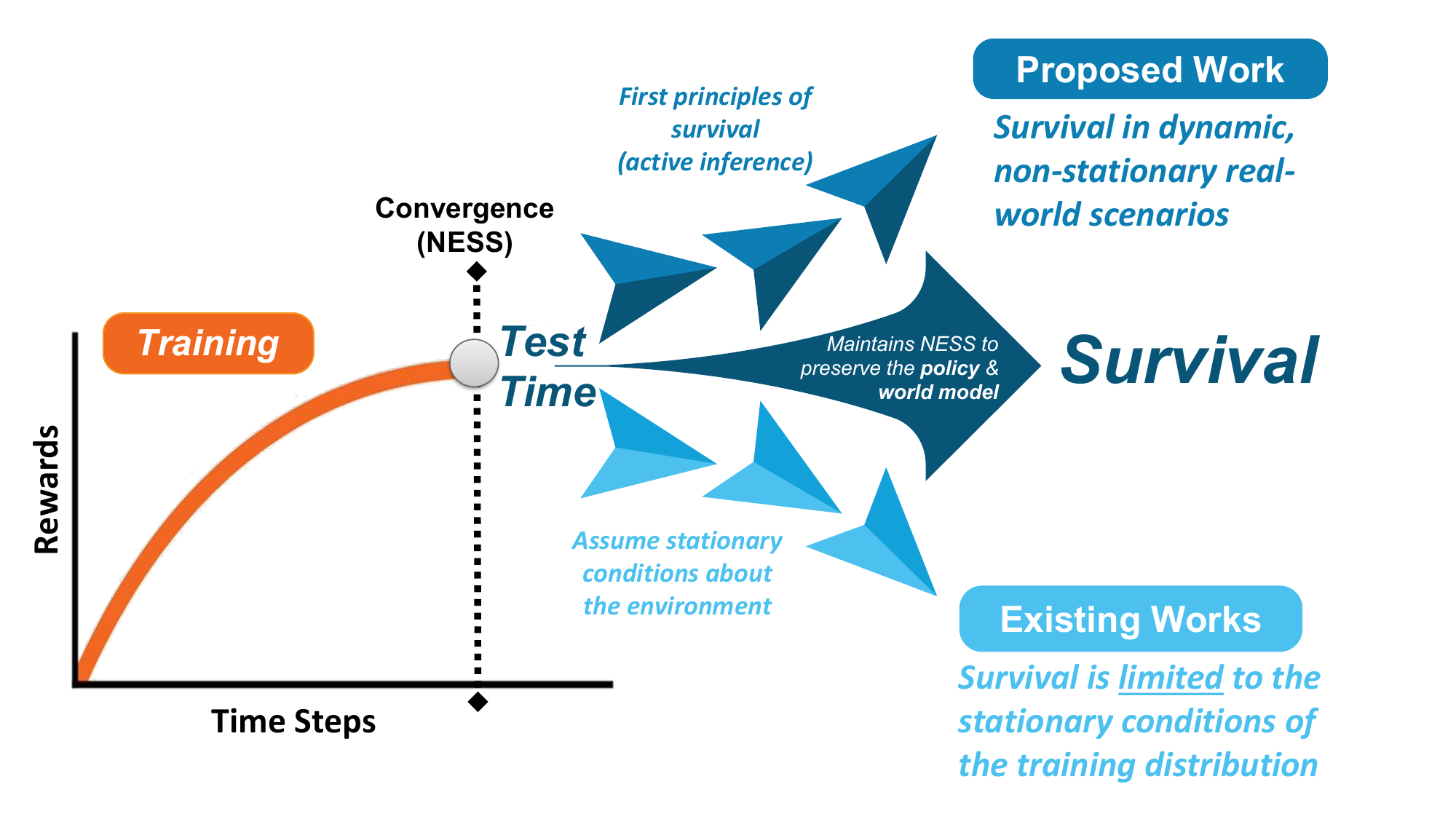}
	\caption{\small{Illustration of the solutions for survival at test time by conserving the NESS of physical \ac{AI} agents, according to the different underlying conditions assumed about the world.}}
	\label{Survival}
	\vspace{-0.50cm}
\end{figure}

Indeed, the physical world is full of unforeseen scenarios that an \ac{AI} agent must deal with to survive. By definition, survival then becomes one of the main facets of an \ac{AI} agent that generalizes in the world. From a physics standpoint, agentic systems that survive by persisting in the world are those that evolve to achieve a \emph{\ac{NESS}} with their environment\footnote{\ac{NESS} is defined by two factors. First, the non-equilibrium considers that the agent continues to interact with its environment. Second, the steady state implies that the probability distribution of the states of the agent remains constant over time.}~\cite{friston2013life}. This is similar to what today's \ac{AI} systems~(e.g., \ac{RL}) reach when their policies and world models converge to optimal behavior at the end of their training phase, as shown in Fig.~\ref{Survival}.
Hence, \ac{AI} agents must strive to preserve the functionality of these policies and world models to \emph{survive at test time}. Here, Fig.~\ref{Survival} shows the two ways in which this can happen.
Current physical \ac{AI} solutions such as \ac{RL} achieve this by assuming stationary conditions about the world at test time. However, these assumptions are often violated in practice due to the uncertain and dynamic nature of the real world\footnote{These assumptions are meaningfully valid if the environment stays the same as in closed-world scenarios (e.g., chess).}~\cite{dulac2021challenges}. This necessarily renders policies (and world models) sub-optimal, setting up these physical \ac{AI} systems to fail at test time. However, by considering the alternative approach which equips \ac{AI} agents with the first principle that ensures survival at test time (i.e., active inference), we can ensure that physical \ac{AI} agents continue to function even in non-stationary and uncertain environments.
This helps meet the requirements of real-world physical environments, where survival cannot be guaranteed by simply relying on stationarity assumptions.
In fact, the urge to survive will drive \ac{AI} agents to generalize in the face of unforeseen scenarios that they encounter. \emph{Remarkably, this fundamental solution remains underexplored in the realm of \ac{AI}, thus making generalization at test time one of the central open challenges in physical \ac{AI}.}







\subsection{Prior Works}
Generalizing at test time has motivated a growing body of work and emerging architectures~\cite{openai_gpt5, assran2025v, yang2025mindjourney, pi07} that position reasoning and world models as their key pillars. 
OpenAI's \text{GPT-5}~\cite{openai_gpt5} employs dynamic routing to selectively invoke chain-of-thought reasoning at test time, improving performance and generalizing when handling complex or unfamiliar tasks. Meta's \text{V-JEPA}~2~\cite{assran2025v} constructs a world model through self-supervised video pretraining with minimal robot interaction, which enables zero-shot generalization and planning in novel environments without task-specific rewards. 
In~\cite{yang2025mindjourney}, a vision-language model is coupled with a world model that enables reasoning over imagined egocentric views for generalizing to unforeseen spatial queries without fine-tuning. 
Physical Intelligence's $\pi_{0.7}$~\cite{pi07} extends this further to physical \ac{AI} by conditioning a generalist vision-language-action model on sub-goal images generated by a world model, thereby allowing it to reason over enriched visual context before predicting its actions to generalize across unseen environments and tasks in a compositional manner.

Despite the significant improvement in generalization, the approaches in~\cite{openai_gpt5, assran2025v, yang2025mindjourney, pi07} remain limited from complementary viewpoints. While GPT-5's dynamic router bears a conceptual resemblance to a prediction error signal, i.e., selectively triggering additional reasoning when a query exceeds the \ac{LLM}'s ability, it lacks a grounded, embodied world model from which such a signal can be derived in a principled manner. This distinction is crucial, because a test-time \emph{scaling law must emerge from first principles of intelligent behavior}, rather than from a learned routing heuristic.
Although \text{V-JEPA~2}~\cite{assran2025v} (and its extensions~\cite{murlabadia2026vjepa2_1}) generalizes across unseen environments and objects, it relies on costly deliberative planning to reason through each step before acting, at the expense of efficiency in policy execution in performing tasks. This is in contrast to GPT-5 which \emph{switches} between efficient execution and deliberative reasoning modes. 
In both~\cite{yang2025mindjourney} and \cite{pi07}, \emph{reasoning is mediated through language}, with the world model serving merely as a downstream visual generator, rather than as the substrate through which the agent directly reasons about world states and plans action consequences. 
Such reasoning is constrained to the statistical patterns in the training data that works for compositional generalization, but falls short in driving physical \ac{AI} agents such as $\pi_{0.7}$ to truly generalize in the world. This is evident as $\pi_{0.7}$ successfully generalizes to novel task combinations that can be decomposed from tasks seen during training, while failing when faced with scenarios outside its training domain. To resolve these new situations at test time, $\pi_{0.7}$ resorts to additional human-provided reasoning to fill in the missing gap. This limitation clearly highlights the need for the mechanism that autonomously generates this reasoning about its actions (i.e., via a world model) when confronting truly unforeseen scenarios. Furthermore, none of these approaches~\cite{openai_gpt5, assran2025v, yang2025mindjourney, pi07} addresses the \emph{continual adaptation} of the policy and world model over time, leaving the \ac{AI} agent unable to incorporate new experiences that it generalizes to at test time. 

From the limitations of~\cite{openai_gpt5} and \cite{ assran2025v}, it becomes clear that there is a need for a principled method that detects prediction errors at test time to trigger the transition from efficient policy execution to deliberative reasoning. Addressing this gap defines the first step toward designing a physical \ac{AI} agent that can truly generalize in unforeseen scenarios. Unlike \cite{yang2025mindjourney} and \cite{pi07}, reasoning with the world model must aim to optimize the agent's actions in response to the prediction error, instead of providing additional visual input. While active inference provides a solution for detecting such prediction errors and reasoning about how to resolve them, \emph{it is still necessary to design a first-principle scaling mechanism that enables the \ac{AI} agent to update its policy with this reasoning at test time.} 
From a biomimetic perspective in which both \ac{RL} and active inference engage dopaminergic signaling in the brain\footnote{A dopaminergic signaling system encompasses a neuromodulatory mechanism in which dopamine encodes reward prediction errors and broadcasts them to regulate synaptic plasticity and action selection throughout the brain. This signaling is exploited in both \ac{RL} and active inference.}, revisiting the neuronal basis of reasoning and planning offers a natural grounding for a rigorous test-time scaling law. \emph{This intersection between learning and inference further motivates the development of a unified framework that blends both training and test time paradigms and can serve as a bedrock for physical \ac{AI} agents that need to continuously reason, generalize, and learn in the physical world.}

\vspace{-0.2cm}

\subsection{Contributions}

The main contribution of this paper is a novel \emph{test-time scaling law for physical \ac{AI} agents} grounded in \emph{active inference}. This scaling law considers how physical \ac{AI} agents reason with their world models to resolve the prediction errors that arise in unforeseen scenarios. This, in turn, drives \ac{AI} agents to dynamically scale their policies with this reasoning to \emph{generalize at test time}. Inspired by the fact that every living system evinces this reasoning capability to ensure its survival~\cite{kirchhoff2018markov}, our test-time scaling law identifies the \emph{minimum amount of generalization} that any physical \ac{AI} agent must exhibit to successfully achieve their narrow task objectives in the real world. Ultimately, this provides a solution to the profound challenge facing physical \ac{AI} (e.g.,~\ac{RL}) agents that fail to achieve their objectives in \emph{non-stationary} environments. Furthermore, the proposed solution naturally extends the learning framework beyond training by \emph{incorporating their new experience at test time}. As such, the proposed approach enables the AI agents to \emph{reinforce} the new instances encountered and resolved at test time, as they update their policy and world model to reduce prediction errors that would otherwise result from similar instances in the future. Thereby, the derived scaling law relaxes the stationary assumptions that limit \ac{RL} at test time, enabling agents to incorporate new knowledge and accumulate experience continuously through interaction with the world. In summary, our key contributions include:

\begin{itemize}
\item Starting from active inference as the first-principles account of self-organization in open systems, we derive a test-time scaling law for physical \ac{AI} agents. This scaling law verifies and grounds the empirical test-time scaling observed in \acp{LLM} (e.g., GPT-5), while extending it to embodied physical \ac{AI} agents operating in non-stationary environments (see Fig.~\ref{Test_Time_Scaling}). This verifies that it is policies that naturally scale with reasoning at test time and \acp{LLM} scale as a simplified, specific case of the original solution. 
The derived scaling law posits \emph{survival} as the universal objective common to all intelligent agents, under which their narrow task objectives are subsumed and governed in the real world. This transforms physical \ac{AI} agents into dual-objective systems that aim to maximize a \emph{general survival reward} formalized by a prediction error through active inference and a narrow task reward (e.g., driving, walking, etc.) instilled through techniques such as \ac{RL}.

\item By associating the dopaminergic signaling of \ac{RL} with its role in active inference, we provide the \emph{first computational model that bridges the role of dopamine during training~\cite{schultz1997neural} to its role at test time within a single mathematical formulation}.
We find that, after convergence of the policy during training (where dopamine converges to its baseline), dopamine spikes again at test time when prediction errors arise in non-stationary environments to initiate the transition from policy execution to reasoning. Neurobiologically, this reasoning is instantiated by neural signals from the \ac{PFC} to the \ac{BG} which speaks to how the value function of the policy is modulated with reasoning in response to prediction errors at test time. This modulation provides the missing test-time scaling mechanism of the policy, that can be mathematically translated into a soft Bayesian belief update with the virtual evidence acquired via reasoning to provide the posterior \emph{scaled policy}.
Remarkably, this Bayesian update decomposes action policies into feed-forward and inference terms, where the feed-forward term recovers the original \ac{RL} policy and the inference term, \emph{which is lacking in existing solutions like~\cite{openai_gpt5, assran2025v, yang2025mindjourney, pi07}}, provides the missing modulation that the \ac{AI} agent needs to swiftly adapt its policy when prediction errors arise in non-stationary environments.

\item As the inference problem underlying the resolution of prediction errors and policy scaling is intractable, we resort to a variational inference solution that introduces the notions of  \emph{\ac{VFE}} and \emph{\ac{EFE}} as variational objectives that must be minimized to resolve prediction errors. Modeling the resolution of prediction errors as a gradient descent on the free energy landscape naturally gives the scaled policy as the solution that simultaneously satisfies both the general survival objective and the narrow task objective, enabling the agent to generalize to unforeseen scenarios at test time. 

\begin{figure}
	\centering
	\includegraphics[width=\columnwidth]{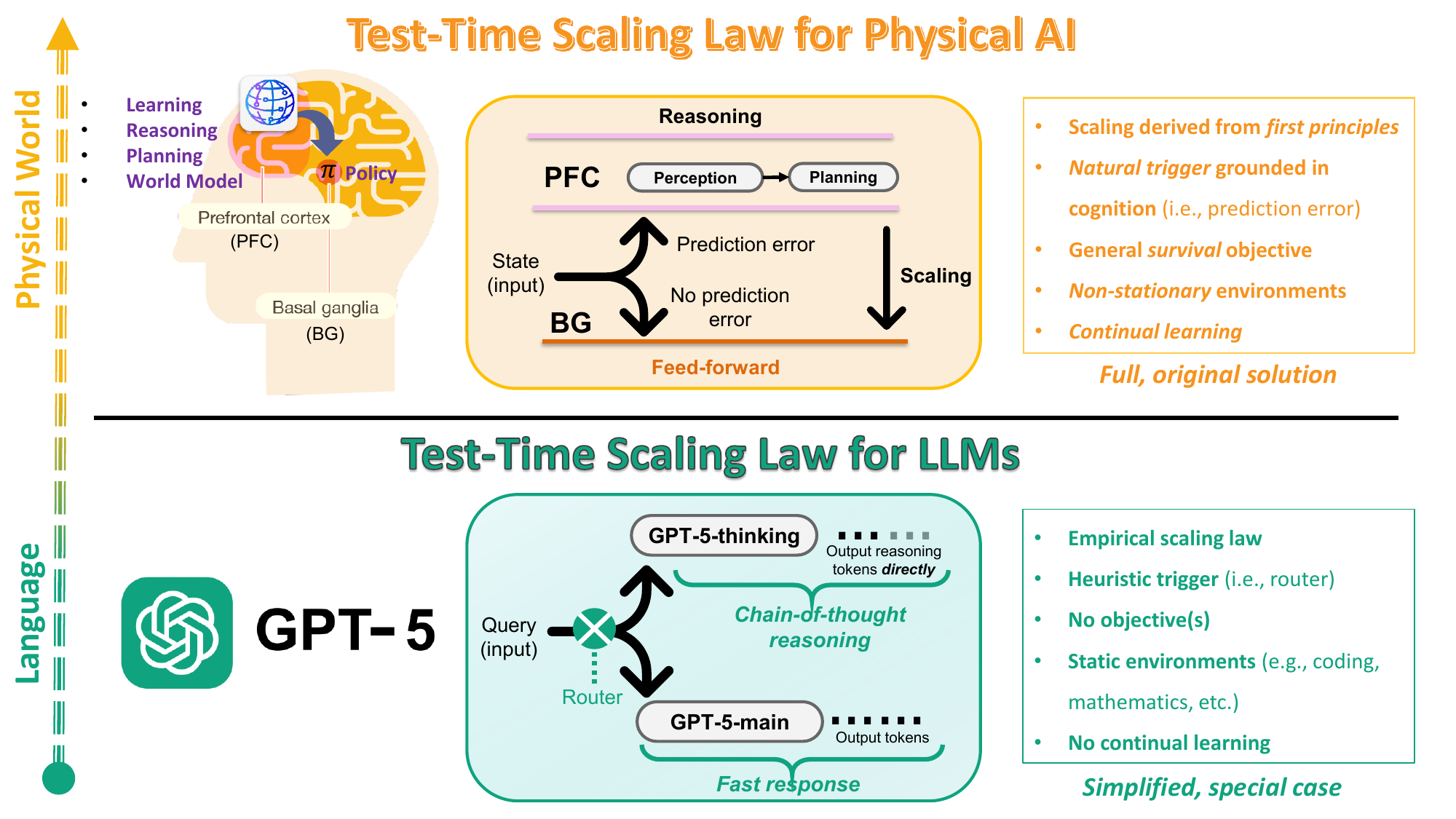}
	\caption{\small{Test-time scaling law for physical \ac{AI} agents (top) vs. test-time scaling law in \acp{LLM} (bottom). The proposed framework grounds test-time scaling in active inference, where prediction errors trigger the transition from \ac{BG} execution to \ac{PFC} reasoning, followed by a feedback to scale the policy in the \ac{BG}. Here, the \ac{LLM}'s (e.g.,~GPT-5) routing and scaling represent a special empirical case of the proposed framework.}}
	\label{Test_Time_Scaling}
	\vspace{-0.250cm}
\end{figure}

\item To reduce future prediction errors, as a means to increase chances of survival, the agent must update its world model to anticipate the \emph{resolved}, unforeseen scenarios. We model this learning process as a Bayesian belief update that reinforces the new experiences accumulated at test time into both the policy and world model. This enables physical \ac{AI} agents to \emph{continuously learn from their own experience} and \emph{autonomously self improve through real-world deployment}, revealing that learning and inference form a continuum driven by prediction errors. Unlike \acp{LLM} that mainly rely on scaling their parameters and training data to improve performance, \emph{our proposed solution scales with the experience of the agent directly through continuous interaction with the world}, transcending the limits of scaling laws that are constrained by data and parameters~\cite{hoffmann2022training, kaplan2020scaling, snell2024scaling}.

\item Simulation results for an autonomous driving task demonstrate robust generalization to unforeseen scenarios while improving inference efficiency by more than $36$\% compared with model-free Q-learning and model-based Bayesian \ac{RL}.

\end{itemize}

\subsection{Key Insights}
The key findings from our derived scaling law demonstrate that the restrictive stationarity assumption underlying \ac{RL} can be relaxed at test time, provided that agents are equipped with the reasoning necessary to eliminate prediction errors. This is possible because the prediction errors associated with the general survival reward operate alongside the narrow task reward throughout training, and, yet, in the existing state of art, they remain unaccounted for at test time. 
This implies that prediction errors were implicitly being minimized while trying to maximize the narrow task reward during training.
Particularly, when an \ac{AI} agent thus fails in a non-stationary environment, it is effectively losing control of its general survival reward which exposes the agent to existential risk, before degrading its narrow task performance. In other words, maximizing long-term reward in conventional \ac{RL} was implicitly tied to survival, and that restoring the general reward through active inference, at test time, as proposed in this paper, addresses a limitation that has been largely ignored in real-world deployments of physical \ac{AI} solutions. Our approach unifies \ac{RL} and active inference as manifestations of the same principle at different levels, where \ac{RL} implicitly minimizes prediction errors within the training distribution of the narrow task objective and active inference makes this minimization explicit when the training distribution is violated (e.g., through a distribution shift~\cite{scholkopf2021toward}) at test time.

Intuitively, the derived solution leverages the efficiency of RL and the context-sensitivity of active inference by integrating both within the same framework. In other words, agents learn a world model (and policy) under the assumption there exists a single-state action policy. From the perspective of active inference, this is a process of habit-learning; namely, ``this is what I do, given this state of the world". However, when the context switches from training to test time, one can now engage active inference to evaluate alternative policies \textit{when, and only when, there is evidence that the context has changed}. This evidence is gathered by observing an increase in surprise (i.e., prediction error), and it can also be used to weight subsequent updates to the state-action policy (encoded in Q for instance). We can also view this as a form of Bayesian model averaging but applied to \ac{RL} (e.g.,~Q-learning). In short, if there is no surprise, then there is no evidence that the context has changed and there is no policy update.

\subsection{Organization}
The rest of the paper is organized as follows. Section~\ref{system model} presents the system design and proposed active inference framework that allows establishing the test time scaling law for physical \ac{AI}. Section~\ref{Perception} provides a variational inference solution that transforms the intractable inference which captures perception into a \ac{VFE} minimization problem. Section~\ref{planning} transforms the intractable inference problems of planning and action into an \ac{EFE} minimization problem. Section~\ref{Learning via Inference} shows how updating the world model (and policy) by learning the unforeseen scenarios, when done at test time, can be modeled as an inference problem that reinforces the new experiences. Section VI presents the simulation results and analysis for an autonomous driving task with an unforeseen jaywalking scenario encountered at test time. Finally, Section VII concludes the paper.

\section{Test-Time Scaling Law for Physical \ac{AI} Agents: Proposed System Design and Active Inference Framework}
\label{system model}
Consider a geographical zone representing a region of the real world, as shown in Fig.~\ref{System_Model}. 
This zone comprises autonomous physical \ac{AI} agents and a wireless network infrastructure that provides connectivity and computing services for these agents\footnote{The proposed system can consider various alternative designs. Here, we assumed that a wireless network, equipped with sufficient resources (e.g., compute, throughput, latency, etc.) is available to service the world model needs of the agents. The implementation challenges and details of  such an architecture are discussed in our prior work~\cite{saad2025artificial}. Alternatively, one can also deploy the world model directly on the agent without the use of a wireless network. The proposed active inference framework and generalization approach, as well as our results, are independent of this design choice.}.
In particular, the wireless network \ac{BS} is equipped with an edge computing server to provide world model services for the physical \ac{AI} agents in this zone. This world model enables the \ac{AI} agent to detect prediction errors that signal an unforeseen scenario and counterfactually reason about the consequences of its actions to resolve these errors. In other words, the \ac{AI} agents offload their reasoning process to the network's edge server.
To enable this process, each \ac{AI} agent will have \acp{DT} alongside the world model at the network edge, that are used to model the alternative policies that a physical \ac{AI} agent may undertake. Unlike conventional \acp{DT} that serve as a simulation and replication tool, the goal of \acp{DT} in our framework is to operate the world model at the network edge and return a real-time reasoning feedback to the physical \ac{AI} agent, i.e.,~\ac{PT}, to scale its policy at test time and generalize in an unforeseen scenario~\cite{thomas2026passive}.

For example and without loss of generality, the physical \ac{AI} agent in our framework could be an autonomous vehicle that has been trained via \ac{RL} to execute real-time driving decisions\footnote{Although the physical \ac{AI} agent in this work is considered to be an autonomous vehicle for illustration purposes, this framework generally applies to any other real-world agent (e.g., robot, drone, etc.) as well.}, as illustrated in Fig.~\ref{System_Model}. This agent follows a stochastic policy $\pi_{o}(a \mid s)$, which specifies a distribution over driving actions $a \in \mathcal{A}$ conditioned on world states $s \in \mathcal{S}$.
As the \ac{AI} agent may encounter unforeseen scenarios while navigating the world (e.g., a jaywalking pedestrian crossing the road on a green traffic light), it is crucial that the network leverages its \acp{DT} and provides the agent with the necessary reasoning capabilities to deal with such situations. This reasoning is shared from the network to the \ac{AI} agent through a set of configurations $\boldsymbol{u}\in \mathcal{U}$ that encode the general reward stemming from the actions that favor survival, along with the prediction error. The agent can then leverage these configurations to update its policy from $\pi_o$ to $\pi_{o}'$ through a test-time scaling mechanism. Thereby, the agent will now choose its action from the scaled policy $\pi_{o}'$ which generalizes the agent's experience in the unforeseen scenario appearing at test time instant $t$.

As shown in Fig.~\ref{System_Model}, the physical \ac{AI} agent interacts with a set $\mathcal{N}$ of $N$ physical assets available in the world. Examples of these assets may include elements such as the pedestrians, vehicles, and traffic lights, that comprise the physical world. In contrast to the \ac{AI} agent that receives reasoning feedback through its \acp{DT} to scale its policy, these physical assets take part in the world model without incurring any feedback to their physical counterparts. This is evidently because only physical \ac{AI} agents are the intelligent systems that may require test-time reasoning to generalize in unforeseen scenarios. To initiate this test-time scaling, the first step is to build the \emph{world model} over the network, which we explain next.

\subsection{World Model}

The world model over the network captures the interactions between all the elements (i.e., \ac{AI} agent and physical assets) in the considered geographical zone. Henceforth, these interacting elements compose an environment whose components are causally dependent at a given time index $\tau \in \{1, \dots, T\}$, where $T$ is the total number of time instances~\cite{richens2024robust}. These causal dependencies among the states of the elements can be represented in the form of a structured prior, as follows:
\begin{equation}
p(s_{\tau}) = p(s^0_{\tau}) p(s^1_{\tau} \mid s^0_{\tau}) p(s^2_{\tau} \mid s^0_{\tau}, s^1_{\tau}) \dots p(s^N_{\tau} \mid s^0_{\tau}, \dots, s^{N-1}_{\tau})  = p(s^0_{\tau}) \prod_{n=1}^{N} p(s^n_{\tau} \mid s^0_{\tau}, \dots, s^{n-1}_{\tau}),
\end{equation}
where  \( s_{\tau} \triangleq (s^0_{\tau}, s^1_{\tau}, \dots, s^N_{\tau}) \), with \(s^0_{\tau}\) being the state of the PT at time $\tau$, and \( s^n_{\tau} \) $\forall n \in \mathcal{N}$ being the state of each physical asset $n$ at time $\tau$.

\begin{figure}
	\centering
	\includegraphics[width=\columnwidth]{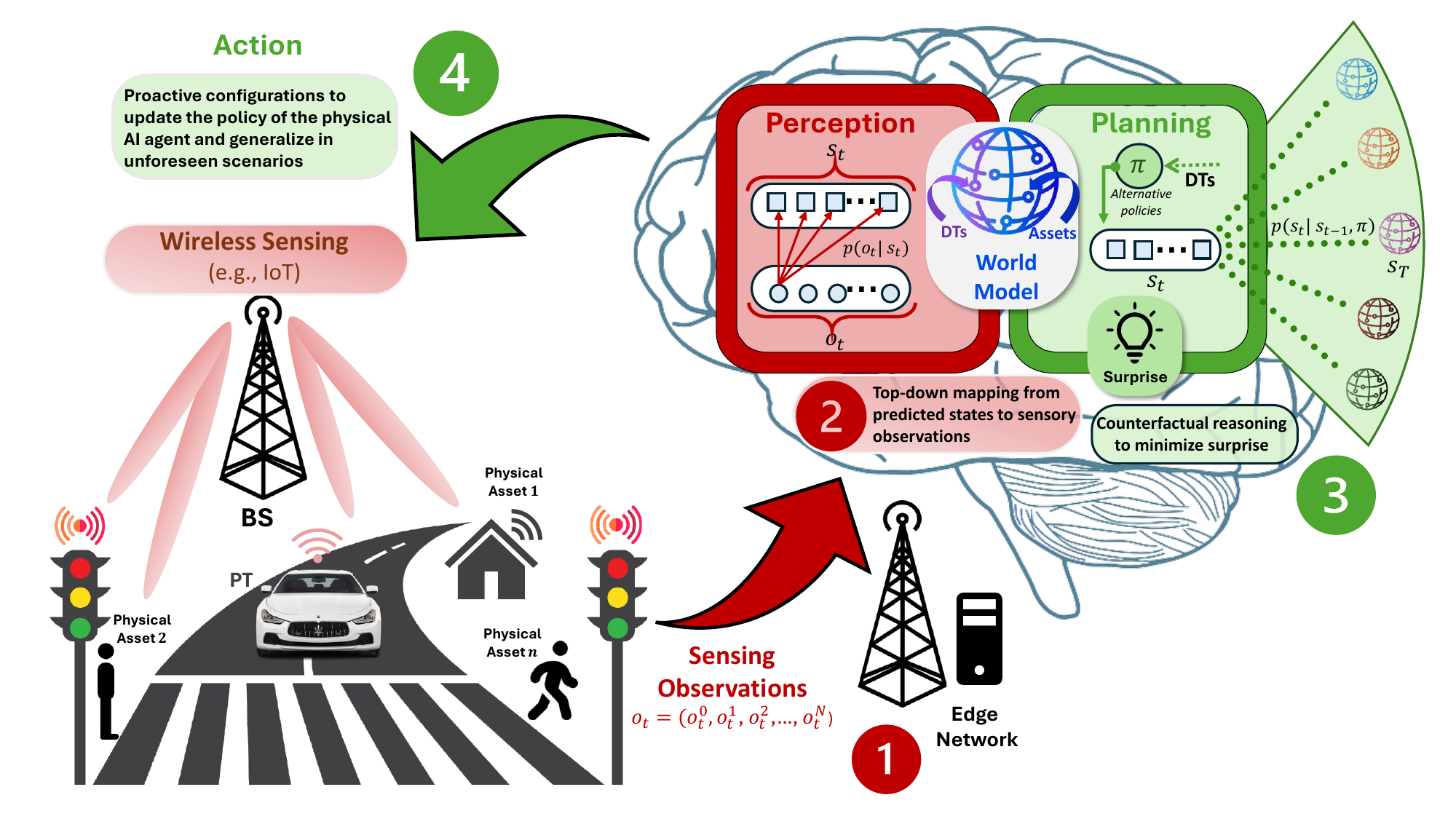}
	\caption{\small{Illustration of our test-time scaling framework comprising a physical \ac{AI} agent (e.g.,~an autonomous vehicle) that reasons through the world model at the wireless network edge to generalize in unforeseen scenarios.}}
	\label{System_Model}
	\vspace{-0.50cm}
\end{figure}

Furthermore, the network captures the state of the world $s_{\tau}$ through sensory observations \(o_{\tau} \triangleq (o^0_{\tau}, o^1_{\tau}, \dots, o^N_{\tau}) \in \mathcal{O}\) about the real-world elements. Here,  \(o^0_{\tau}\) is the sensory observation from the PT at time $\tau$ and \(o^n_{\tau}\), $\forall n \in \mathcal{N}$ is the sensory observation from each physical asset $n$ at time $\tau$. Accordingly, each state $s_{\tau}^{n}$ generates a corresponding sensing observation \( o^n_{\tau} \) $\forall n \in \mathcal{N}$. In general,  sensing observations can be captured through different wireless sensing technologies such as \ac{IoT} sensors. Here, these sensory observations from different elements are conditionally independent when conditioned on their states.
Accordingly, this yields a likelihood function that factorizes as: $p(o_{\tau} \mid s_{\tau}) = \prod_{i=0}^{N} p(o^i_{\tau} \mid s^i_{\tau})$. 

As the world evolves according to the causal structure of the environment, the states of the elements at time $\tau$ are causally dependent on their states in the previous time instant $\tau-1 $ and the corresponding courses of actions of the \ac{AI} agent, stemming from its policy, that can represented as $\pi = (a_{1}, \ldots, a_{T})$. Accordingly, we can now define the world model that captures the temporal and causal dependencies.

\begin{definition}
   The generative \emph{world model} $\mathcal{W}$ over the states and observations of the real-world elements is defined as:
    \begin{align}
    \mathcal{W}(o_{1:T}, s_{1:T}, \pi) \coloneqq p(o_{1:T}, s_{1:T}, \pi) &=   p(\pi) p(s_{1}) \prod_{\tau = 2} ^{T} p(s_\tau \mid s_{\tau-1},\pi) \prod_{\tau =1}^{T} p(o_\tau \mid s_\tau)   \label{world model}  
    \end{align}
\end{definition}

The world model $\mathcal{W}$ in~\eqref{world model} is a (forward) generative model that factorizes the joint distribution over observations $o_{1:T}$, states $s_{1:T}$, and actions $\pi$ into three components. The first captures the prior over the agent's policy $p(\pi)$ and the initial state distribution $p(s_1)$, the second describes the causal state transition dynamics $p(s_\tau | s_{\tau-1}, \pi)$ which encode how the world evolves over time as a consequence of the \ac{AI} agent's actions, and the third describes the observation model $p(o_\tau | s_\tau)$ which captures how the network perceives the resulting world states through its sensors. The factorization in~\eqref{world model} encodes the full causal structure of the agent's interaction with the environment, enabling it to predict future observations and plan actions accordingly.

After acquiring this world model, the network can now provide the \ac{AI} agent with the reasoning capabilities necessary to deal with unforeseen scenarios. Initially, the sensory input received in an unforeseen scenario violates the sensory observations predicted by the world model over the network. This initiates a prediction error or mismatch called \emph{surprise}~\cite{itti2009bayesian, friston2010free}. This formal notion of surprise (a.k.a., surprisal or self-information) signals the level to which the current state of the world is unpredictable under the world model $\mathcal{W}$ for which the \ac{AI} agent (i.e., policy $\pi_o$) exists. In particular, the surprise under $\mathcal{W}$ at time $\tau$ is defined by the network with respect to its received sensory observations as follows~\cite{friston2010free}: 
\begin{align}
    \mathscr{S} (\tau, \pi_o)  &= - \ln p(o_\tau \mid \pi_o) =   - \ln  \sum_{s_\tau} p(o_\tau, s_{\tau} \mid \pi_o) = - \ln \sum_{s_\tau} p(o_\tau \mid s_\tau) p(s_\tau \mid \pi_o).\nonumber
\end{align}

In essence, once $\mathscr{S} (t, \pi_o)$ exceeds a surprise threshold\footnote{In general, the surprise threshold is an adaptive parameter. However, it is considered to be a fixed parameter in the scope of this work for simplicity.} $\epsilon$, the situation is considered to be unforeseen, for which relying on $\pi_o$ by itself will not be able to generate the optimal action. In other words, an unforeseen scenario is one in which the agent's accumulated experience encoded in $\pi_o$ is insufficient to predict and respond to the current state of the world, which renders the policy $\pi_o$ sub-optimal and requires deliberative reasoning to re-evaluate options. The goal of this reasoning under the world model is to control surprise and reduce it back below the threshold~$\epsilon$. To achieve this, it is necessary that the network starts by \emph{perceiving} the world after its predictions of this world have been violated\footnote{For simplicity, we limit perception to the case in which prediction errors exceed the threshold. However, it is essential to clarify that perception is always performed in biological systems even in the absence of prediction errors. In other words, perception (as perceptual inference) is neglected when prediction errors remain below the threshold $\epsilon$.}. Then, the network engages in \emph{planning} to evaluate the actions that may reduce future (expected) surprise. By using \emph{active inference}, we precisely model these perception and planning functions as Bayesian inference processes. Hence, we need to extend the inference process to model the test-time scaling of the policy from $\pi_o$ to $\pi_o'$. The agent then \emph{acts} according to the policy $\pi_o'$ in order to resolve the uncertainty induced by an unforeseen scenario. Next, we explain how perception, planning, and action can be modeled through Bayesian inference.

\subsection{Perception, Planning, and Action as Inference}
\emph{Perception} entails aligning the world model with the current state of the world.
This is equivalent to acquiring an accurate estimate of the state of the world at time $t$ over the network.
This estimate can be conceptualized as the process of inferring the most likely hidden states causing the sensory input received by the network~\cite{friston2017active}. In particular, this can be modeled as an inference process which considers integrating prior expectations about the expected states of the world with real-time sensory data observations to infer the current state of the world~\cite{parr2022active}. In other words, the network updates its beliefs about the states of the world according to the sequence of sensory observations $o_{1:t}$ received from the real world.
The goal of this inference is to ensure the fit between the world model $ \mathcal{W}$ and the current state of the world at time $t$. The optimal posterior beliefs about these states are given by Bayes rule:
\begin{equation}\label{perception}
   p(s_{1:T}, \pi_o \mid o_{1:t}) = \dfrac{p( s_{1:T}, o_{1:t}, \pi_o)}{p(o_{1:t})}=\dfrac{p(o_{1:t} \mid s_{1:T}, \pi_o) p(s_{1:T}, \pi_o)}{p(o_{1:t})}.
\end{equation}
Here, the agent's policy $\pi_o$ is explicitly included in~\eqref{perception} not as a latent variable to be inferred, but as a fixed conditioning variable that specifies the actions taken under the world model, allowing the posterior to evaluate how well the world model's predictions, under $\pi_o$, align with the actual observations.


Upon perceiving the current state of the world at time $t$, the agent then exploits its \acp{DT} at the network edge to \emph{plan} the consequences of alternative action sequences $\pi \in \Pi$ through the world model. This planning entails the simulation of future trajectories which allow the physical \ac{AI} agent to counterfactually reason which actions best resolve prediction error in this unforeseen scenario.
For instance, in Fig.~\ref{Passive_to_Active_Inference}, we show an unforeseen jaywalking scenario in which a pedestrian crosses the street at a green light in front of the \ac{AI} agent (an autonomous vehicle). During this scenario, the prediction error increases because this unforeseen scenario was not observed during training. One way to reduce the prediction error in this scenario is by directing the \ac{AI} agent to avoid collision with the pedestrian.
In fact, this implies that the \ac{AI} agent must realize certain \emph{preferences} that minimize its expected surprise over its future trajectory $\tau > t$. Technically, this is a path-integral of a surprisal, which essentially maps to seeking the path of least action~\cite{friston2023path}. Preferred states of the world are those that the agent will likely tend to occupy, which combine two streams: (i) states that preserve the integrity of the \ac{AI} agent (e.g., protecting humans, avoid crashing, etc.) and, (ii) situations \emph{reinforced} by experience to become familiar under $\pi_o$  (i.e., preferences that the world admits or allows). Since $\mathcal{W}$ assigns higher likelihood to preferred states, steering the \ac{AI} agent towards these states naturally reduces the mismatch between predicted and actual observations, thereby minimizing surprise~\cite{schwartenbeck2015evidence}. Hence, it is essential to evaluate the surprise resulting from these alternative trajectories or paths at future time instants $\tau > t$.
Thus, the \emph{surprise} anticipated over each planned trajectory $\pi$ will be:
\begin{equation}\label{planning_eq}
   \mathscr{S}(\pi)=  \sum_{\tau=t+1}^{T} \mathscr{S} (\tau, \pi) = - \sum_{\tau = t+1}^{T}  \mathbb{E}_{p(o_{\tau} \mid \pi)} \ln \left[  \sum_{s_{\tau}} p(o_{\tau}, s_{\tau} \mid \pi)\right] = \sum_{\tau = t+1}^{T} H\left[ p(o_{\tau} \mid \pi)   \right],
\end{equation}
where $H[\cdot]$ is the entropy function, which captures the expected surprise.

This reasoning about expected surprise must then be leveraged by the \ac{AI} agent to return to its preferred states of the world. This restores states of the world whereby $\pi_o$ becomes once again the optimal policy\footnote{This offers a first-principles explanation for Kahneman's observation that System~1 naturally dominates System~2~\cite{kahneman2011thinking}.}.
To drive the \ac{AI} agent to take the actions that push it towards its preferred states, it is necessary to evaluate the level to which each action of the \ac{AI} agent can reduce surprise.
This can be captured through the \emph{surprise per action-value}, defined as:
\begin{equation}
Q_{\mathscr{S}}(a,s_t) =  \sum_{\pi \in \Pi} \delta(a, \pi(t)) \mathscr{S} (\pi) \quad \forall a \in \mathcal{A},  
\label{Q-surprise}
\end{equation}
where $\delta (\cdot)$ is the Kronecker delta function. The value in~\eqref{Q-surprise} represents the accumulated surprise (i.e., prediction error) resulting from taking action $a$ in state $s_t$. Thus,  $Q_{\mathscr{S}}(a,s_t)$ reflects the reward attributed to achieving the general survival objective.
For the \ac{AI} agent to update its policy $\pi_o$ with reasoning, the network shares a feedback $\boldsymbol{u}_t=[\gamma, Q_{\mathscr{S}} (a_{1},s_t), \ldots, Q_{\mathscr{S}} (a_{|\mathcal{A}|},s_t)]$ that transmits the parameters behind this reasoning to the \ac{AI} agent. Here, $\gamma\in [0, +\infty)$ is a policy precision parameter that reflects the neuro-modulatory mechanisms that determine the confidence of the network in the planning process and $|\mathcal{A}|$ is the cardinality of $\mathcal{A}$. Now, the \ac{AI} agent can interpret this feedback as additional evidence generated by reasoning, thinking, or even imagination about the world to update its beliefs about its base policy $\pi_o$. To see how this inference process scales the policy at test time, we next explain the biological process by which the brain updates the policy with such reasoning. This subsequent discussion will allow us to ground the belief update of the policy in a neuromimetic manner. 

\begin{figure}
	\centering
	\includegraphics[width=\columnwidth]{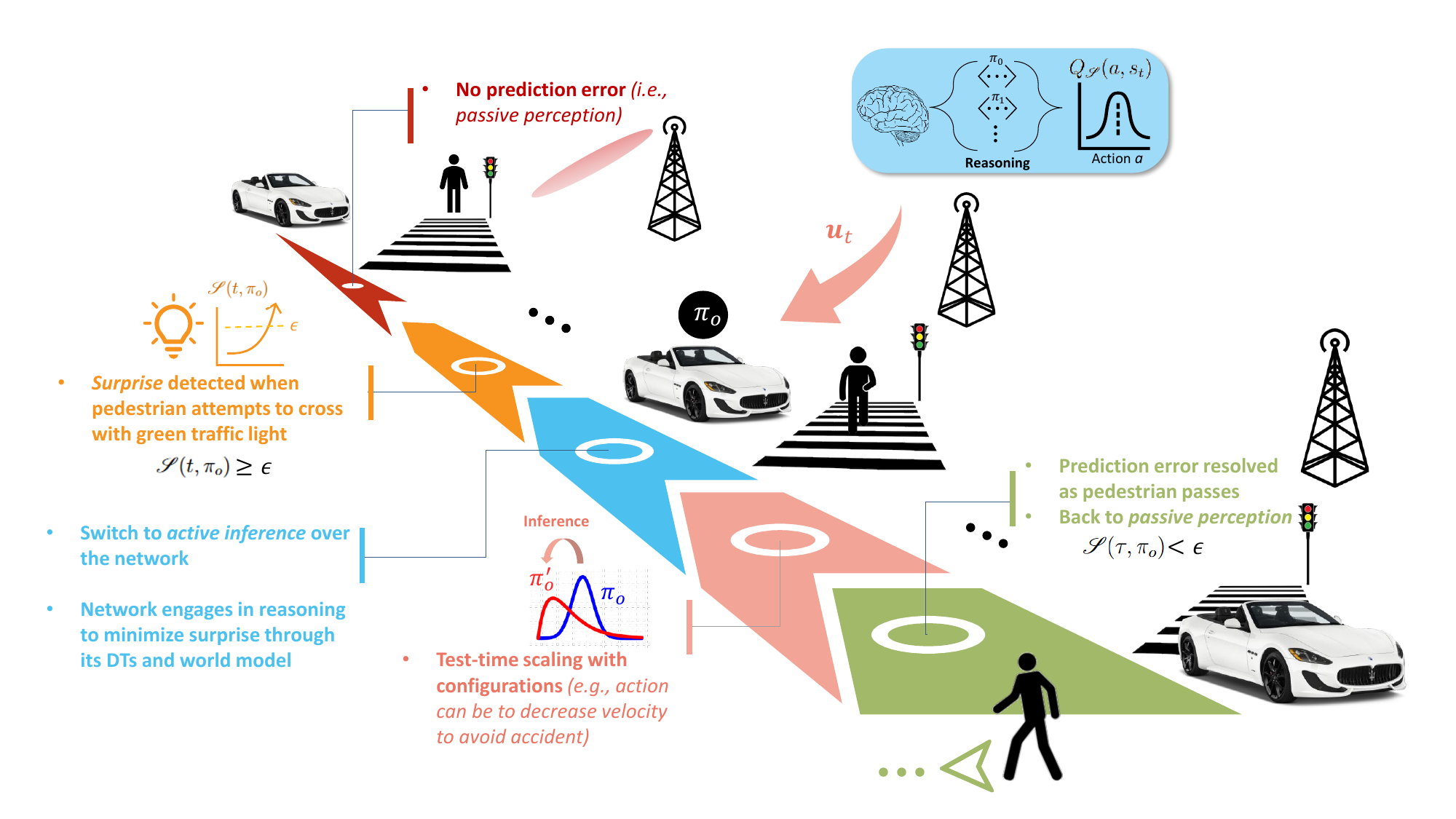}
	\caption{\small{Illustration of the test-time scaling mechanism in which the \ac{AI} agent must generalize in an unforeseen scenario that considers a jaywalking pedestrian appearing at test time.}}
	\label{Passive_to_Active_Inference}
	\vspace{-0.50cm}
\end{figure}

In training an \ac{AI} agent using techniques such as \ac{RL}, dopamine initially settles at its baseline level as a given policy converges, thereby allowing the agent to operate in feed-forward mode with its base policy. In an unforeseen scenario, dopamine spikes again due to prediction errors that violate predictions under the agent's learned world model. This spike signals shifts control from the \ac{BG} which governs policy execution to the \ac{PFC} that initiates deliberative reasoning. In particular, the \ac{PFC} responds by transitioning to active inference which includes planning (as inference) that generates counterfactual simulations of imagined action trajectories evaluated through alternative policies. The \ac{PFC} then transmits the resulting reasoning signal back to the \ac{BG} to modulate the value function of the policy in proportion to the prediction error.
Conceptually, this is similar to the reasoning feedback from the wireless network to the \ac{AI} agent.
Henceforth, this modulation captures the \emph{test-time scaling mechanism} for physical \ac{AI} agents which updates their policy $\pi_o (a|s_t)$ into $\pi_o' (a|s_t) \doteq p(a \mid o_{t+1}, s_t)$ that generalizes to the unforeseen scenario. Effectively, this test-time scaling translates into an update of beliefs about the policy with counterfactual evidence from reasoning which can be formulated as a Bayesian inference process\footnote{The term \emph{inference} is used in two distinct senses throughout this paper. The first is psychological, referring to the deliberative reasoning of System 2 as opposed to the habitual execution of System 1, following Kahneman's dual-process theory~\cite{kahneman2011thinking}. The second is mathematical, referring to Bayesian inference as the process of updating beliefs to minimize prediction error, which is the computational mechanism underlying active inference.}.
Unlike perception that admits inference based on sensory observations $o_{1:t}$, policy scaling relies on counterfactual observations $o_{t+1:T}$; namely, evidence generated from the world model $\mathcal{W}$ through imagined action trajectories $\pi$. 
An inference with such counterfactual evidence yields a soft Bayesian belief update through \emph{Pearl’s method of virtual evidence}~\cite{pearl2014probabilistic} (also known as Jeffrey's rule~\cite{jacobs2019mathematics}). This, in turn, gives rise to posterior scaled policy of the \emph{test-time scaling law} that is established next.
\vspace{-0.2cm}

\begin{theorem}[Test-Time Scaling Law]
The \emph{test-time scaling mechanism} that scales policy $\pi_o (a|s_t)$ to $\pi_o' (a|s_t)$ with inference about the actions that are likely to minimize prediction error (i.e., surprise)  is given by:\vspace{-0.2cm}
\begin{align}
    \underbrace{\pi_{o}'(a \mid s_t)}_{\mathrm{posterior}} &\propto 
    \underbrace{\pi_o(a \mid s_t)}_{\mathrm{prior}}  \underbrace{p(o_{t+1:T} \mid \pi)}_{\mathrm{likelihood}} \nonumber\\
    &\quad = \underbrace{[\pi_o (a \mid s_t)]^{1-\theta}  \big[\pi_o (a \mid s_t) \exp \big(-\gamma Q_{\mathscr{S}}(a, s_t) \big)\big]^{\theta}}_{\mathrm{soft\, Bayesian\, update}}          \nonumber \\
    & \quad = \underbrace{\pi_o(a \mid s_t)}_{\mathrm{feed-forward}}  \underbrace{\exp \big(-\gamma \theta Q_{\mathscr{S}}(a, s_t) \big)}_{\mathrm{inference}},
    \label{scaling}
\end{align}
\end{theorem}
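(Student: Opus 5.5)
The plan is to read \eqref{scaling} as three equalities and justify each in turn.

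\textbf{First line: Bayes' rule with counterfactual evidence.} By definition $\pi_o'(a\mid s_t)\doteq p(a\mid o_{t+1:T},s_t)$. Here $o_{t+1:T}$ are the counterfactual observations generated by the world model $\mathcal{W}$ in \eqref{world model} along imagined trajectories $\pi\in\Pi$. I will apply Bayes' rule to $a$ given $s_t$:
\begin{equation*}
p(a\mid o_{t+1:T},s_t)\propto \pi_o(a\mid s_t)\,p(o_{t+1:T}\mid a,s_t).
\end{equation*}
I then identify $p(o_{t+1:T}\mid a,s_t)$ with the trajectory likelihood $p(o_{t+1:T}\mid\pi)$, marginalized over the policies $\pi$ whose first action is $a$. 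This is the same Kronecker-delta selection $\delta(a,\pi(t))$ used in \eqref{Q-surprise}, and it gives the first line.

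\textbf{Second line: specifying the likelihood.} Because the evidence is imagined rather than sensed, I treat it as Pearl's virtual evidence. The likelihood of action $a$ is taken to be the softmax (Gibbs) form in expected surprise, $p(o_{t+1:T}\mid a,s_t)\propto\exp(-\gamma Q_{\mathscr{S}}(a,s_t))$. To motivate this, I will use \eqref{planning_eq}: $\mathscr{S}(\pi)$ is a sum of expected negative log-evidence terms. Exponentiating $-\mathscr{S}(\pi)$ therefore yields a geometric-mean likelihood of the future observations under $\pi$. The precision $\gamma$ enters as an inverse temperature that scales the log-likelihood, which is standard for policy priors in active inference. Summing through the $\delta$ selection then converts $\mathscr{S}(\pi)$ into $Q_{\mathscr{S}}(a,s_t)$.

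Next comes the soft (Jeffrey) update. With reliability weight $\theta\in[0,1]$ on the virtual evidence, the posterior is the geometric interpolation between the prior and the full Bayesian posterior:
\begin{equation*}
\pi_o'\propto \pi_o^{1-\theta}\big[\pi_o e^{-\gamma Q_{\mathscr{S}}}\big]^{\theta}.
\end{equation*}
I will justify this as the minimizer of $(1-\theta)\,\mathrm{KL}(q\|\pi_o)+\theta\,\mathrm{KL}(q\|\pi_o e^{-\gamma Q_{\mathscr{S}}}/Z)$ over distributions $q$ on $\mathcal{A}$. Differentiating with a Lagrange multiplier for normalization gives $\ln q=(1-\theta)\ln\pi_o+\theta(\ln\pi_o-\gamma Q_{\mathscr{S}})+\mathrm{const}$, which is exactly the second line.

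\textbf{Third line: collecting exponents.} The third line is algebra: $\pi_o^{1-\theta}\pi_o^{\theta}=\pi_o$, leaving the factor $\exp(-\gamma\theta Q_{\mathscr{S}})$. Normalizing over $\mathcal{A}$ fixes the proportionality constant. I will also note the sanity limits: $\theta=0$ or $Q_{\mathscr{S}}$ constant in $a$ recovers $\pi_o$, and $\gamma\to\infty$ concentrates the posterior on the surprise-minimizing actions.

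\textbf{Main obstacle.} The hardest step is the second one. The identification of $p(o_{t+1:T}\mid\pi)$ with $\exp(-\gamma Q_{\mathscr{S}})$ is not a consequence of the generative model alone. It is a modeling choice: a tempered, expected log-likelihood replaces the true marginal likelihood. I will state it explicitly as the virtual-evidence specification. I will also check that using expected surprise (entropy) rather than realized surprise is consistent with the observations $o_{t+1:T}$ being counterfactual, i.e., averaged under $p(o_\tau\mid\pi)$.
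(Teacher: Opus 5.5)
Your first and third steps match the paper's: Bayes' rule on $a$ with the imagined observations $o_{t+1:T}$ treated as Pearl-style virtual evidence, then collecting exponents. Like the paper, you correctly present the Boltzmann likelihood as a specification rather than a consequence of $\mathcal{W}$. You motivate it by reading $e^{-\mathscr{S}(\pi)}$ as a geometric-mean likelihood, while the paper appeals to maximum entropy under a constraint on the summed entropies.

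The real difference is the middle equality. The paper never derives the geometric interpolation. It asserts it as the virtual-evidence (``Jeffrey'') soft update and defers $\theta$ to Section IV. There it realizes the update as free-energy descent on $D_{\mathrm{KL}}[\pi_o'\|\psi]$ started at $\pi_o$, motivated by the NESS flow of Lemma~\ref{Helmholtz}, and stops it with the surprise-dependent rule \eqref{stopping_criterion}, $\theta=1-e^{-\Omega\mathscr{S}}$. Your characterization of $\pi_o e^{-\gamma\theta Q_{\mathscr{S}}}$ as the unique minimizer of $(1-\theta)D_{\mathrm{KL}}[q\|\pi_o]+\theta D_{\mathrm{KL}}[q\|\pi_o e^{-\gamma Q_{\mathscr{S}}}/Z]$ is exact for every $\theta\in[0,1]$ and explains why the softening is geometric. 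The paper's descent instead runs along $\pi_o^{1-\lambda}\psi^{\lambda}$ with $\psi\approx e^{-\gamma Q_{\mathscr{S}}}$, so it matches the closed form only approximately. What the paper's route buys is the tie between $\theta$ and surprise, and the iterative picture in which more surprise means more deliberation. In your version that becomes an exogenous reliability weight.

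Three points to tighten.
\begin{itemize}
\item Marginalizing per-policy likelihoods $\propto e^{-\gamma\mathscr{S}(\pi)}$ over the policies with $\pi(t)=a$ gives $\sum_{\pi:\pi(t)=a}p(\pi\mid a)\,e^{-\gamma\mathscr{S}(\pi)}$. By contrast, $e^{-\gamma Q_{\mathscr{S}}(a,s_t)}=\prod_{\pi:\pi(t)=a}e^{-\gamma\mathscr{S}(\pi)}$ is a product. The two agree only if each action heads a single policy. The paper's $\psi$ has the marginal form and is explicitly only an approximation of $e^{-\gamma Q_{\mathscr{S}}}$. So posit the action-level virtual evidence directly rather than claiming it comes from marginalization.
\item With the untempered likelihood $e^{-\gamma Q_{\mathscr{S}}}$ in the first line, that line is the $\theta=1$ posterior, so your KL argument replaces it rather than equals it. The chain of equalities holds literally only if the virtual likelihood there is the tempered $e^{-\gamma\theta Q_{\mathscr{S}}}$. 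That is exactly how Pearl's method encodes the reliability of soft evidence.
\item The geometric interpolation is not Jeffrey's rule, which mixes conditionals linearly. Your KL argument, not the label, carries that step.
\end{itemize}
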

\noindent where $\theta \in [0,1]$ is the normalized surprise that bounds $ \mathscr{S} (t, \pi_o)$ in the unit interval (this parameter will be mathematically defined in Section~\ref{planning}) to control the scaling of the base policy.

The novel, fundamental insight that we can draw from the test-time scaling in~\eqref{scaling} stems from probabilistic inference. In particular, the $\exp\left(-\gamma  Q_{\mathscr{S}}(a, s_t)\right)$ term transforms the value function of the actions into likelihood probabilities, analogous to the Boltzmann distribution. 
Since planning aims to minimize the anticipated surprise $\mathscr{S}(\pi)$, which accumulates as a path integral of entropies over future trajectories as defined in~\eqref{planning_eq}, the likelihood of future observations under a planned trajectory is naturally modeled as an exponential distribution\footnote{This choice is the maximum entropy distribution subject to a constraint on the expected sum of entropies, and therefore makes the fewest assumptions beyond what the planning objective already prescribes.}. 
Clearly, when the world is mostly predictable, the surprise  $\mathscr{S}(t, \pi_o) \approx 0 \implies \theta \approx 0$. In this case, the inference term in~\eqref{scaling} vanishes. Consequently, this reaffirms that action selection reduces into feed-forward control without deliberative thinking (i.e., reasoning). Importantly, \emph{the policies in today's standard \ac{RL} formulations remain limited to this case where expected surprise is ignored due to stationary assumptions~\cite{sutton2018reinforcement, 
mnih2015human, chandak2020optimizing}, which is clearly a special case of the equation in \eqref{scaling}.}

\begin{remark}
It is useful to clarify that the final form of~\eqref{scaling} resembles Boltzmann policies commonly encountered in \ac{RL} where action probabilities are proportional to action-value $\exp(Q(s,a))$~\cite{sutton2018reinforcement}, as well as to energy-based models of the form $p(x) \propto \exp(-E(x))$~\cite{lecun2006tutorial}. However, the derivation and interpretation here are fundamentally distinct. First, the update arises naturally as a soft Bayesian posterior update through Pearl's virtual evidence method, where the exponential scaling is the virtual evidence likelihood rather than a hand-crafted energy function. Second, the update is multiplicative over the prior policy $\pi_o(a|s)$, reflecting belief refinement rather than a from-scratch softmax over actions. Third, $Q_{\mathscr{S}}(a, s_t)$ is not a static action-value estimate but a surprise per action value function that aggregates the expected surprise cost of taking action $a$ in state $s_t$ at test time. This makes the effective precision (i.e., inverse temperature) $\gamma\theta$ a dynamic, surprise-modulated quantity rather than a fixed hyperparameter. The similarity to Boltzmann formulations is therefore a consequence of the underlying Bayesian geometry, and not an assumption imported from the energy-based modeling literature. 
\end{remark}

Indeed, one could argue that the absence of the test-time scaling mechanism in~\eqref{scaling} has prevented today's physical \ac{AI} agents from generalizing in unforeseen scenarios. 
Evidently, initiating this scaling operation first requires performing the necessary perception and planning processes. Nevertheless, this can be challenging as calculating the model evidence $p(o_{1:t})$ in \eqref{perception} for perception can be computationally intractable in practice. This is due to the fact that calculating this distribution requires marginalizing over all the possible states, which can be infeasible given the large number of states in complex models of the world. Similarly, calculating the future surprise while planning includes summations over the set of states and observations that can be analytically intractable. This, in turn, limits the scaling of the policies and their generalization at test time.

To address this challenge, one can call on variational Bayes that furnishes tractable solutions for perception and planning. This introduces \emph{free energy} as an upper bound on surprise, which can be directly optimized to converge on the posterior while eliding marginalization. Here, a key observation is that both perception and planning share the same underlying goal. On the one hand, perception aims to infer the best explanation behind the sensory observations. Equivalently, perception in~\eqref{perception} amounts to maximizing model evidence for $\tau=t$. On the other hand, planning aims to find the action sequence that minimizes the future surprise for $\tau > t$. Clearly, minimizing surprise is equivalent to maximizing model evidence under the preferred world model $\mathcal{W}$. Thus, perception and planning coincide in the context of model evidence maximization. As such, combining both operations under this single objective allows formulating the inference problem in terms of free energy minimization that essentially serves as a bound on (log) model evidence or surprisal. Intuitively, action and learning should further follow this free energy formulation as they can be modeled in terms of inference.

Next, we will describe the (variational) active inference solution summarized in Fig.~\ref{Solution_Roadmap} that unifies perception, planning, action selection, and learning under a single objective of \emph{free energy minimization}.

\begin{figure}
	\centering
	\includegraphics[width=\columnwidth]{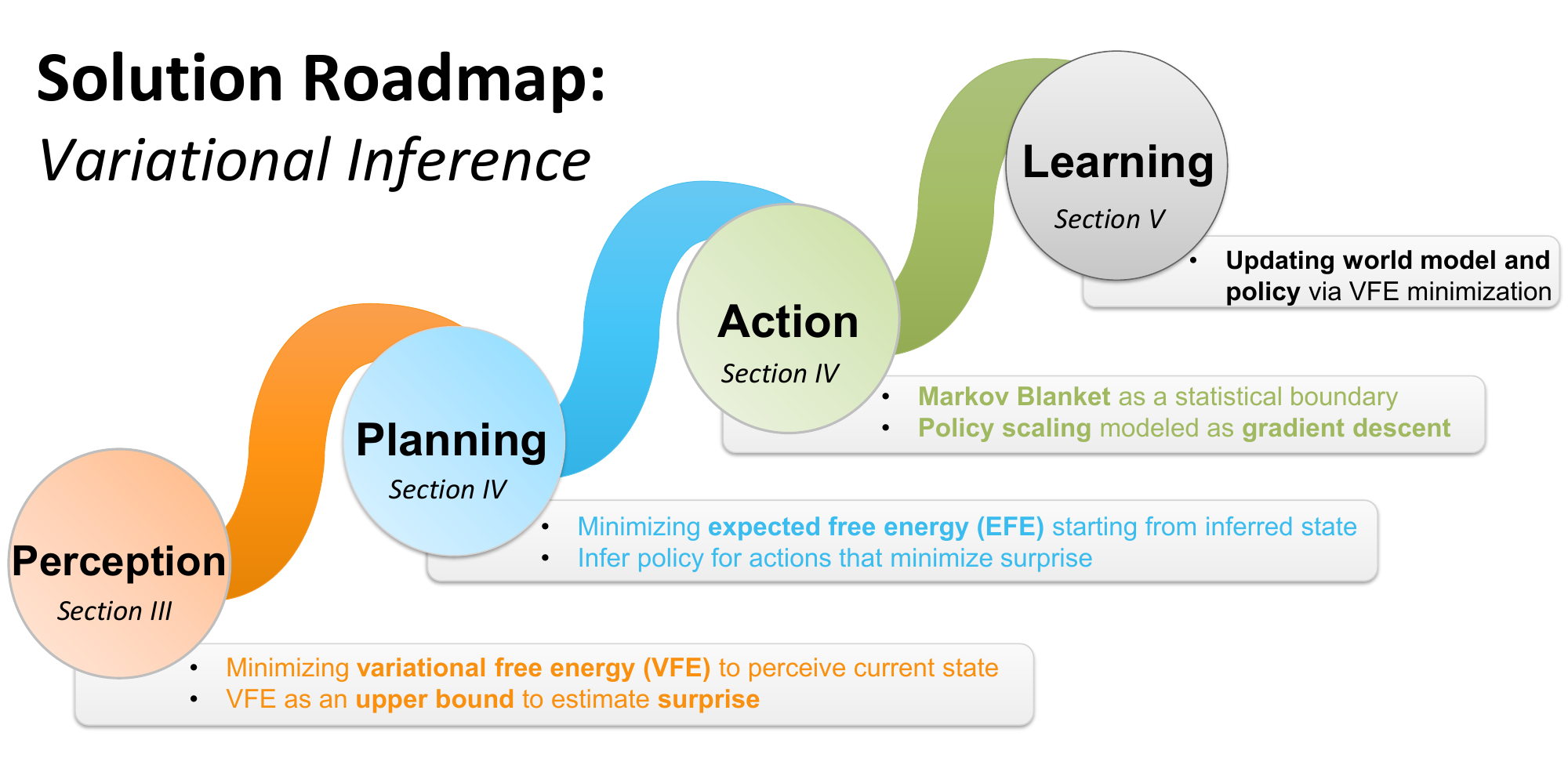}
	\caption{\small{The roadmap for the proposed variational inference solution that unifies perception, planning, action, and learning to enable the test-time scaling framework for the physical \ac{AI} agent.}}
	\label{Solution_Roadmap}
	\vspace{-0.50cm}
\end{figure}

\section{Perception as Inference: Minimizing Variational Free Energy}
\label{Perception}



In this section, we adopt a variational Bayesian inference framework~\cite{jordan1999introduction} to obtain a tractable solution for perception and surprise estimation. This requires introducing a variational distribution to approximate the posterior over states in~\eqref{perception}. In particular, an approximation to the true posterior is obtained by minimizing the \ac{VFE} via variational inference, which simultaneously provides an upper bound on surprise. Thus, by building on this formulation, we cast this inference problem as a \ac{POMDP}. To solve this \ac{POMDP} and render inference computationally feasible, we then employ a mean-field approximation\cite{wainwright2008graphical}. This allows adopting \ac{VMP}~\cite{winn2005variational} as an efficient graphical and algorithmic scheme for solving the resulting \ac{POMDP} and determining the corresponding surprise.

First, we introduce a variational distribution $q(s_{1:T}, \pi_o)$ to approximate the true posterior $p(s_{1:T}, \pi_o \!\mid \!o_{1:t})$ in~\eqref{perception}. This approximation is captured by the \ac{KL} divergence between the variational distribution and the true posterior: $D_{\mathrm{KL}}\big[ q(s_{1:T},  \pi_o) \!\,\|\, p(s_{1:T}, \pi_o \mid o_{1:t}) \big]$. This divergence is then minimized through the \ac{VFE} functional $F[q(s_{1:T}, \pi_o)]$ (also known as the negative evidence lower bound (ELBO) in variational inference~\cite{kingma2014auto}), defined next.
\begin{definition}
The \emph{\ac{VFE}} is defined as the divergence between the variational distribution $q(s_{1:T}, \pi_o)$ and generative world model $\mathcal{W}(s_{1:T}, o_{1:t}, \pi_o)$:
\begin{align}
F[q(s_{1:T}, \pi_o)] &= D_{\mathrm{KL}}\big[ q(s_{1:T}, \pi_o) \,\|\, p(s_{1:T}, o_{1:t}, \pi_o) \big] \nonumber \\ 
&= \mathbb{E}_{q(s_{1:T}, \pi_o)}\!\left[ \ln q(s_{1:T}, \pi_o) - \ln p(s_{1:T}, o_{1:t}, \pi_o) \right] \nonumber \\ 
&= D_{\mathrm{KL}}\big[ q(s_{1:T}, \pi_o) \,\|\, p(s_{1:T}, \pi_o) \big] - \mathbb{E}_{q(s_{1:T}, \pi_o)}\!\left[ \ln p(o_{1:t} \mid s_{1:T}, \pi_o) \right] \nonumber.
\end{align}
\label{VFE} 
\end{definition} 
\vspace{-0.7cm}
\noindent Based on Definition~\ref{VFE}, we can show that the \ac{VFE} admits a tractable solution to approximate surprise.
\begin{lemma}
The \ac{VFE} provides an upper bound on surprise (or lower bound on model evidence), i.e., $\mathscr{S} (t, \pi_o) \leq F[q(s_{1:T}, \pi_o)].$
    \begin{IEEEproof}
        See Appendix A.
    \end{IEEEproof}
    \label{surpise}
\end{lemma} 

From Definition~\ref{VFE} and Lemma~\ref{surpise}, we observe that by varying $q(s_{1:T}, \pi_o)$ to minimize the \ac{VFE}, it is possible to approximate $p(s_{1:T}, \pi_o \!\mid \!o_{1:t})$, while simultaneously approaching an accurate estimate for $\mathscr{S} (t, \pi_o)$. This means that the network initially perceives the physical world as it captures surprise. 
If surprise exceeds the threshold $\epsilon$, the network engages in deliberative reasoning, whereby it plans alternative policies $\pi \in \Pi$ that can minimize the surprise over future time instances to scale the policy $\pi_o$ of the \ac{AI} agent. Otherwise, the network moves forward by just considering the policy $\pi_o$.
This sequential inference process can be naturally formulated as a \ac{POMDP}~\cite{smith2022step}. This is due to the fact that the network can only perceive the current state of its world (i.e., \ac{AI} agent and assets) through sensory observations and must infer the causes behind these observations to update its beliefs about the state of the world. Accordingly, the network must act under uncertainty about the hidden states that generate sensory observations. In contrast to \ac{RL}, which exclusively aims to maximize its long-term task reward, this active inference framework subsumes the task reward while ensuring survival by minimizing the \ac{VFE} as an upper bound on prediction errors.
We thus define a \ac{POMDP} to model this problem, with the following key components:

\begin{itemize}
    \item \emph{Agent:} The network is the agent that performs inference and reasoning on behalf of the physical \ac{AI} agent in the real world. In other words, the \ac{AI} agent offloads its reasoning process onto the network edge.
    \item \emph{Observations:} At each time step $\tau$, the network receives sensory observations $o_\tau \in \mathcal{O}$ from the real world.
    \item \emph{States:} The network infers the state of the real world $s_\tau \in \mathcal{S}$ at time instant $\tau$ from its sensory observations  $o_\tau$ through perception.
    \item \emph{Observation model}: The likelihood function $\boldsymbol{A}$ that maps the sensory observations $o_\tau \in \mathcal{O}$ into the states $s_\tau \in \mathcal{S}$, with its entries parametrized as $p(o_\tau \mid s_\tau).$

    \item \emph{Dynamics (transition) model:} The evolution of hidden states is governed by the transition model $\boldsymbol{B}_{\pi, \tau}$ with its entries parameterized as $p(s_{\tau} \mid s_{\tau-1}, \pi) $ $\forall s_\tau \in \mathcal{S}$, where the actions $a_{\tau} \in \mathcal{A}$ performed by the network at time $\tau$ come from the policies defined as the sequences of future actions $\pi \in \Pi$. 
    
    \item \emph{Actions:} The network shares a feedback configuration $\boldsymbol{u}_\tau \in \mathcal{U}$ to scale the policy of the \ac{AI} agent in case of surprise. Otherwise, no feedback is shared in the absence of surprise.
\end{itemize}


\begin{figure}
	\centering
	\includegraphics[width=0.85\columnwidth]{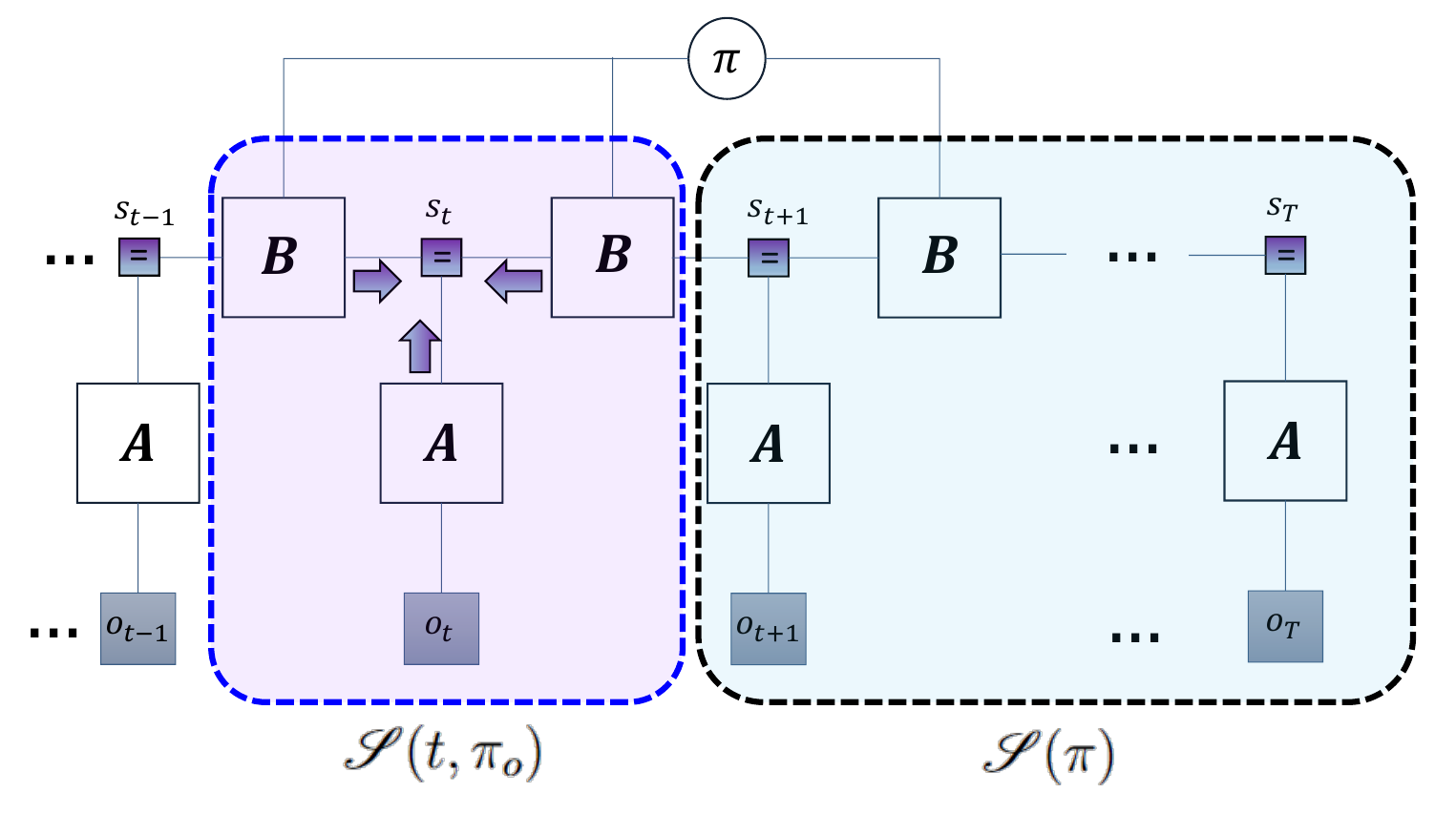}
	\caption{\small{Forney-style factor graph representing the generative model of the \ac{POMDP}.}}
	\label{Factor_Graph}
	\vspace{-0.50cm}
\end{figure}

To solve this \ac{POMDP} with approximate inference, we adopt a \ac{VMP} scheme that can efficiently approximate the true posteriors by minimizing the \ac{VFE}. This is illustrated in Fig.~\ref{Factor_Graph} by leveraging a Forney-style factor graph representation~\cite{forney2001codes} of the generative world model $\mathcal{W}(s_{1:T}, o_{1:t}, \pi)$ defined in \eqref{world model}. In particular, approximate Bayesian inference is implemented through the propagation of messages between neighboring nodes, that correspond to the local updates of their sufficient statistics. Effectively, \ac{VMP} provides efficient, distributed, and biologically inspired inference updates, as elucidated next.

To derive the variational inference updates, we minimize the \ac{VFE} in \eqref{VFE} with respect to $q(s_{1:T}, \pi_o)$. To ensure tractability of this approximation, the variational distribution $q(s_{1:T}, \pi_o)$ is typically assumed to factorize under a \emph{mean-field approximation} into the product of independent distributions, such that: \vspace{-0.5cm}
\begin{equation}
q(s_{1:T}, \pi_o) \approx q(\pi_o)\prod_{\tau = 1 }^{T} q(s_{\tau} | \, \pi_o).
\label{MFA_0}
\end{equation}
The approximation in~\eqref{MFA_0} assumes that states of the world conditioned on the base policy $q(s_{\tau} | \, \pi_o)$ $\forall \tau \in \{1, \ldots, T\}$ are independent across time.
As perception is concerned with the estimation of states $q(s_{1:T})$, we can consider $q(\pi_o)$ as a constant term and safely neglect it during this belief update. Thus, we can rewrite the \ac{VFE} in Definition~\ref{VFE} to isolate $q(s_{\tau}\mid \pi_o)$ as follows:
\begin{align}
    F[q(s_{1:T}, \pi_o)] &\propto \mathbb{E}_{q(s_{1:T},  \pi_o)}\!\left[ \ln q(s_{\tau} | \,\pi_o) \right] + \mathbb{E}_{q(s_{1:T},  \pi_o)}\!\left[ \ln \prod_{\kappa \neq \tau} q(s_{\kappa} | \,\pi_o) \right] -\mathbb{E}_{q(s_{1:T},  \pi_o)}\!\left[ \ln p(s_{1:T},  o_{1:t}, \pi_o)  \right] \nonumber\\
    &= \mathbb{E}_{q(s_{\tau} |  \pi_o)}\!\left[ \ln q(s_{\tau} | \,\pi_o) \right] + \mathbb{E}_{q(s_{-\tau} | \, \pi_o)}\!\left[ \ln \prod_{\kappa \neq \tau} q(s_{\kappa} | \,\pi_o) \right] -\mathbb{E}_{q(s_{1:T},  \pi_o)}\!\left[ \ln p(s_{1:T},  o_{1:t}, \pi_o)  \right].
    \label{MFA}
    \vspace{-0.45cm}
\end{align}
In~\eqref{MFA}, $\mathbb{E}_{q(s_{-\tau} | \, \pi_o)} [\cdot]$ is the expectation over all factors excluding $q(s_{\tau} | \, \pi_o)$.
Here, $q(s_{\tau} | \,\pi_o)$ is constant for all variables in $q(s_{1:T},  \pi_o)$ except $q(s_{\tau} | \,\pi_o)$ and $\prod_{\kappa \neq \tau} q(s_{\kappa} | \,\pi_o)$ is constant with respect to $q(s_{\tau} | \,\pi_o)$. 
To minimize $ F[q(s_{1:T}, \pi_o)]$ with respect to $q(s_{\tau} | \, \pi_o)$, we rearrange ~\eqref{MFA} to reach the Euler–Lagrange step that gives the coordinate ascent variational update:
\begin{align}
    F[q(s_{1:T}, \pi_o)] &= \mathbb{E}_{q(s_{\tau} |  \pi_o)}\!\left[ \ln q(s_{\tau} | \,\pi_o) \right] -\mathbb{E}_{q(s_{1:T},  \pi_o)}\!\left[ \ln p(s_{1:T},  o_{1:t}, \pi_o)  \right] + K \nonumber \\
    &= \mathbb{E}_{q(s_{\tau} |  \pi_o)}\! \left[ \ln q(s_{\tau} | \,\pi_o) - \mathbb{E}_{q(s_{-\tau} | \, \pi_o)} [ \ln p(s_{1:T},  o_{1:t}, \pi_o)]  \right] + K \nonumber\\
    &= \mathbb{E}_{q(s_{\tau} |  \pi_o)}\! \left[ \ln q(s_{\tau} | \,\pi_o) -  \ln q^*(s_{\tau} | \,\pi_o)  \right] + K \nonumber\\
    &= D_{\mathrm{KL}}\big[ q(s_{\tau} |  \pi_o)   \,\|\,  q^*(s_{\tau} | \,\pi_o) \big] + K.
    \vspace{-0.2cm}
    \label{Euler-Lagrange}
\end{align}
Thus, minimizing $F[q(s_{1:T}, \pi_o)]$ is equivalent to minimizing $D_{\mathrm{KL}}\big[ q(s_{\tau} |  \pi_o)   \,\|\,  q^*(s_{\tau} | \,\pi_o) \big]$.
Here, $\ln q^*(s_{\tau} | \,\pi_o) \triangleq \mathbb{E}_{q(s_{-\tau} | \, \pi_o)} [ \ln p(s_{1:T},  o_{1:t}, \pi_o)]$ is the optimal posterior that minimizes  $F[q(s_{1:T}, \pi_o)] $ and $K$ represents the constant terms. Considering that only the factors in~\eqref{world model} that incorporate $s_{\tau}$ remain as a result of the expectation, then we can write the message passing equation: 
\begin{align}
\ln q^{*}(s_\tau \mid \pi_o)
&\propto \underbrace{\mathbb{E}_{q(s_{\tau-1}\mid\pi_o)}\!\left[\ln p(s_\tau \mid s_{\tau-1}, \pi_o)\right]}_{\text{prior forward transition}}
 + \underbrace{\mathbb{E}_{q(s_{\tau+1}\mid\pi_o)}\!\left[\ln p(s_{\tau+1} \mid s_\tau, \pi_o)\right]}_{\text{prior backward transition}}
+ \underbrace{\mathbf{1}_{\{\tau \leq t\}} \,\ln p(o_\tau \mid s_\tau)}_{\text{observation}} \nonumber\\
& \quad = \underbrace{\ln \boldsymbol{B}_{\pi_o,\tau-1}\, \boldsymbol{s}_{\pi_o,\tau-1}}_{\text{message 1}}
\;+\; \underbrace{\ln \boldsymbol{B}_{\pi_o,\tau}^{\top}\, \boldsymbol{s}_{\pi_o,\tau+1}}_{\text{message 2}}
\;+\; \underbrace{\ln \boldsymbol{A}^{\top}\, \boldsymbol{o}_{\tau}}_{\text{message 3}},
\label{message passing}
\end{align}
where $\boldsymbol{s}_{\pi_o, \tau-1}$, $\boldsymbol{s}_{\pi_o, \tau+1}$, and $\boldsymbol{o}_{\tau}$ represent the vector representations of the belief distributions $q(s_{\tau-1} \mid \pi_o)$, $q(s_{\tau+1} \mid \pi_o)$, and $q(o_{\tau} \mid \pi_o)$, respectively.
In~\eqref{message passing}, message~1 propagates beliefs forward through the transition model $\ln \boldsymbol{B}_{\pi_o, \tau-1} \boldsymbol{s}_{\pi_o, \tau-1}$, 
capturing what the agent expects the current state to be, based on its history. Also, message~2 propagates beliefs backward from future states through $\ln \boldsymbol{B}_{\pi_o, \tau}^{\top} \boldsymbol{s}_{\pi_o, \tau+1}$, capturing what the current state must have been given where the agent is heading. Message~3 grounds both predictions in actual sensory evidence through $\ln \boldsymbol{A}^{\top} \boldsymbol{o}_{\tau}$, incorporating the likelihood of the current observation under the observation model. Thus, the posterior belief $q^*(s_{\tau} \mid \pi_o)$ will be the sum of these three messages, combining past predictions, future expectations, and present observations into a unified state estimate.

To ensure that the expectations required for message updates in~\eqref{message passing} admit tractable and closed-form solutions, we consider that the world model belongs to the \emph{conjugate-exponential family}. 
This is reasonable since the conjugate-exponential family is the class of distributions where the posterior retains the same functional form as the prior after incorporating new observations~\cite{winn2005variational}. This property ensures that belief updates remain closed-form and tractable at every time step, which is a necessary condition for the message passing updates in~\eqref{message passing} to be computationally feasible as the agent accumulates observations over time.
In this formulation, both the transition and likelihood models in~\eqref{message passing} are \emph{categorical distributions}, while their parameters admit \emph{Dirichlet priors}~\cite{bishop2006pattern}. Herein, the categorical--Dirichlet conjugacy ensures that the resulting posteriors of the inferred states maintain the same functional form as their priors.
Effectively, each column of the likelihood matrix $\boldsymbol{A}$ and transition matrices $\boldsymbol{B}_{\pi, \tau} \forall \pi \in \Pi$ represents a categorical distribution, whose parameters are governed by Dirichlet distributions that evolve over time. This conjugacy allows natural parameters of exponential-family distributions to be passed as messages, enabling efficient inference and learning within the world model $\mathcal{W}$ and policy $\pi_o$. Accordingly, we will show in Section~\ref{Learning via Inference} how this conjugacy translates to enabling efficient and tractable parameter updates of the world model $\mathcal{W}$ and policy $\pi_o$. Notably, inference of states at time $\tau = 1$ and $\tau=T$ will drop messages $1$ and $3$ in \eqref{message passing}, respectively.

To transform the the log posterior in~\eqref{message passing} into a probability distribution, we perform a softmax to normalize the product of all passed messages, as follows:
\begin{equation}
\boldsymbol{s}_{\pi_o,\tau}
\;=\;
\sigma\!\Big( \ln \boldsymbol{B}_{\pi_o,\tau-1}\, \boldsymbol{s}_{\pi_o,\tau-1}
\;+\; \ln \boldsymbol{B}_{\pi_o,\tau}^{\top}\, \boldsymbol{s}_{\pi_o,\tau+1}
\;+\; \ln \boldsymbol{A}^{\top}\, \boldsymbol{o}_{\tau} \Big).
\end{equation}

While \ac{VMP} updates provide a closed-form solution for obtaining $q^{*}(s_\tau \mid \pi_o)$ under conjugacy, we note that we can alternatively express the same updates as a gradient descent on prediction errors. This highlights how perception as inference could be generally implemented even in the absence of conjugacy. Moreover, it is imperative to highlight this alternative solution to draw contrast with the gradient descent solution that minimizes the expected prediction errors through action, as will be explained further in Section~\ref{planning}. To implement this belief update dynamically, we introduce the state prediction error $\boldsymbol{\epsilon}_{\pi, \tau} = \ln \boldsymbol{B}_{\pi_o,\tau-1}\, \boldsymbol{s}_{\pi_o,\tau-1}
\;+\; \ln \boldsymbol{B}_{\pi_o,\tau}^{\top}\, \boldsymbol{s}_{\pi_o,\tau+1}
\;+\; \ln \boldsymbol{A}^{\top}\, \boldsymbol{o}_{\tau}  -  \ln \boldsymbol{s}_{\pi_{o}, \tau}$. This prediction error corresponds to the \ac{VFE}'s rate of change w.r.t. $\boldsymbol{s}_{\pi_{o}, \tau}$, i.e., $\boldsymbol{\epsilon}_{\pi_{o}, \tau} = - \dfrac{\partial \boldsymbol{F}_{\pi_o}}{\partial \boldsymbol{s}_{ \pi_{o}, \tau}}$. This can be seen from the rearrangement of the \ac{VFE} by substituting \eqref{message passing} into~\eqref{Euler-Lagrange}, as follows:
\begin{align}
F[q(s_{1:T}, \pi_o)] 
&\propto \mathbb{E}_{q(s_{\tau} |  \pi_o)} \Big[ \ln q(s_{\tau} | \,\pi_o) 
- \mathbb{E}_{q(s_{\tau-1}\mid\pi_o)}\!\left[\ln p(s_\tau \mid s_{\tau-1}, \pi_o)\right] \nonumber \\
 & \qquad \qquad + \mathbb{E}_{q(s_{\tau+1}\mid\pi_o)}\!\left[\ln p(s_{\tau+1} \mid s_\tau, \pi_o)\right]
+ \mathbf{1}_{\{\tau \leq t\}} \,\ln p(o_\tau \mid s_\tau) \Big] \nonumber \\
 &= \boldsymbol{s}_{\pi_{o}, \tau}. \Big[ \ln \boldsymbol{s}_{\pi_{o}, \tau} -\ln \boldsymbol{B}_{\pi_o,\tau-1}\, \boldsymbol{s}_{\pi_o,\tau-1}
\;-\; \ln \boldsymbol{B}_{\pi_o,\tau}^{\top}\, \boldsymbol{s}_{\pi_o,\tau+1}
\;-\; \ln \boldsymbol{A}^{\top}\, \boldsymbol{o}_{\tau} \Big] \nonumber \\ 
& =  - \boldsymbol{s}_{\pi_{o}, \tau} \;. \; \boldsymbol{\epsilon}_{\pi_{o}, \tau}.
\label{gradient}
\end{align}
Hence, $\boldsymbol{\epsilon}_{\pi_{o}, \tau} = - \dfrac{F[q(s_{1:T}, \pi_o)]}{\boldsymbol{s}_{\pi_{o}, \tau}} \approx - \nabla_{\boldsymbol{s}_{ \pi_{o}, \tau}}  \boldsymbol{F}_{\pi_o}$ when considering that each message passing update conforms to a gradient step that transitions $\boldsymbol{s}_{\pi_o,\tau}$ towards the fixed-point solution in~\eqref{message passing}. 
Thus, this solution is iteratively reached by iterating the following gradient steps in dual (logit) coordinates until $\boldsymbol{\epsilon}_{\pi_{o}, \tau}$ is minimized: 
\begin{align}
 \left\{
\begin{aligned}
    & \boldsymbol{\epsilon}_{\pi_{o}, \tau} \leftarrow \ln \boldsymbol{B}_{\pi_o,\tau-1}\, \boldsymbol{s}_{\pi_o,\tau-1}
    \;+\; \ln \boldsymbol{B}_{\pi_o,\tau}^{\top}\, \boldsymbol{s}_{\pi_o,\tau+1}
    \;+\; \ln \boldsymbol{A}^{\top}\, \boldsymbol{o}_{\tau} \; - \ln \boldsymbol{s}_{\pi_{o}, \tau}, \\
    & \ln \boldsymbol{s}_{\pi_{o}, \tau} \leftarrow \ln \boldsymbol{s}_{\pi_{o}, \tau} + \boldsymbol{\epsilon}_{\pi_{o}, \tau}, \\ 
    &\boldsymbol{s}_{\pi_{o}, \tau} \leftarrow \sigma (\ln \boldsymbol{s}_{\pi_{o}, \tau}).
\end{aligned}
 \right . 
 \label{iterative}
 \end{align}
 
This operation is performed over all edges $\boldsymbol{s}_{\pi_o, \tau} \;\forall \tau \in \{1, \ldots, T\}$ of the factor graph in Fig.~\ref{Factor_Graph} until the difference between
updates converges to an acceptably low value. Accordingly, when $\boldsymbol{\epsilon}_{\pi_{o}, \tau}$ is minimized, it is possible to approximate the posterior as $q^{*}(s_{1:T}, \pi_o)\approx p(s_{1:T}, \pi_o \!\mid \!o_{1:t})$ according to \eqref{MFA_0}. In this case, the surprise can be estimated as
$\mathscr{S} (t, \pi_o) \simeq F[q(s_{1:T}, \pi_o)] =  \sum_{\tau} - \boldsymbol{s}_{\pi_{o}, \tau} \;. \;  \boldsymbol{\epsilon}_{\pi_{o}, \tau}$ by considering the states $\boldsymbol{s}_{\pi_{o}, \tau}$ over all time instants $\tau \in \{1, \ldots, T\}$. 
Thus, the message passing updates in~\eqref{message passing} and the gradient descent on prediction errors in~\eqref{iterative} are two equivalent perspectives on the same inference process. In particular, the former operates as belief propagation over a graphical model such as Fig.~\ref{Factor_Graph}, while the latter corresponds to (a form of) predictive coding in neural systems that performs a local gradient descent to minimize the mismatch between predicted and observed states.

After the network estimates the posterior belief over states $q^{*}(s_{1:T}, \pi_o)$ and the surprise $\mathscr{S} (t, \pi_o)$, the network can now leverage these estimates to engage in planning and action selection. In the following section, we will formulate planning and action as free energy minimization that aims to minimize prediction error over future trajectories. By the end of this process, the physical \ac{AI} agent scales their policy $\pi_o$ to generalize in the unforeseen scenario.

\section{Planning \& Action as Inference: Minimizing Expected Free Energy}
\label{planning}


When the prediction error exceeds its threshold $\epsilon$ during perception, the \ac{AI} agent can infer that it is facing an unforeseen scenario. The network responds by engaging in counterfactual reasoning over alternative policies in an attempt to minimize this prediction error over future time instants. As both perception and planning aim to minimize prediction error (or equivalently maximize model evidence), it is possible to view planning as an extension of perception. While \ac{VFE} quantifies the surprise incurred by the \ac{AI} agent at the current time $t$ given past observations, the \ac{EFE} is introduced to extend this notion forward in time by computing the surprise that the agent anticipates incurring under each policy $\pi \in \Pi$ over future trajectories. Thereby, \ac{EFE} shifts inference from just explaining past and current observations $o_{1:t}$ to optimizing future behavior $\tau > t$. Thus, the network evaluates these policies $\pi$ according to the \ac{EFE} to further capture the surprise per action values $Q_{\mathscr{S}}(a,s_t) ,\forall a\in \mathcal{A}$. Subsequently, the network shares these values back with the physical \ac{AI} agent as an inferred policy $\psi$ to scale its policy $\pi_o$ by which it can take action. 


To determine the \ac{EFE} that bounds surprise over future time instants\footnote{Since free energy upper bounds surprise and the Boltzmann distribution is the canonical distribution over energy-bounded quantities, this provides additional theoretical evidence that modeling the likelihood of $Q_{\mathscr{S}}(a, s_t)$ as a Boltzmann distribution in~\eqref{scaling} is well-founded.}, we extend the notion of \ac{VFE} in~\eqref{VFE} to future time steps $\tau>t$. As future observations $o_{t+1:T}$ have not been encountered yet, they can be treated as a random variable~\cite{millidge2021whence}. Accordingly, we can define the \emph{per-step \ac{EFE}} at time $\tau$ as follows:
\begin{equation}
G (\tau, \pi_o)
:= \mathbb{E}_{q(o_\tau,s_\tau\mid\pi_o)}\!\big[\ln q(s_\tau\mid\pi_o) - \ln p(o_\tau,s_\tau\mid\pi_o)\big].
\label{EFE}
\end{equation}
Then, we can find the \ac{EFE} under a fixed policy $\pi_o$ over the future horizon $t < \tau \leq T$ according to the following proposition.
\begin{proposition}
The \ac{EFE} can be decomposed into a measure of divergence (i.e., risk of deviation) between 
the expected observations from $\pi_o$ and the preferences of the \ac{AI} agent 
that are encoded as a likelihood over observations (i.e., $p(o_{\tau}\mid C))$ plus the 
expected ambiguity (i.e., likelihood entropy), as follows:
    \begin{equation}
     G (\tau, \pi_o) 
    = \underbrace{D_{\mathrm{KL}}\big[ q(o_{\tau} \mid  \pi_o) \,\|\, p(o_{\tau} \mid C) 
    \big]}_{\text{goal-directed behavior (preference matching)}} + 
    \underbrace{\mathbb{E}_{q(s_\tau\mid\pi_o)} H\Big[p(o_\tau\mid 
    s_\tau)\Big]}_{\text{epistemic value (uncertainty reduction)}},
    \end{equation} 
    where the model evidence depends on the preferences $C$ encoded within $\mathcal{W}$ to shape the prior preference \(p(o_\tau\mid C)\).
    \begin{IEEEproof}
        See Appendix B.
    \end{IEEEproof}
    \label{EFE}
\end{proposition}
Proposition~\ref{EFE} shows that planning is simultaneously controlled by two complementary objectives\footnote{Alternatively, the \ac{EFE} admits a decomposition into epistemic and pragmatic value as: 
$
G(\tau, \pi_o) = -\mathbb{E}_{q(o,s|\pi_o)}\big[\ln q(s|o,\pi_o) - \ln q(s|\pi_o)\big] - \mathbb{E}_{q(o|\pi_o)}\big[\ln p(o|C)\big], 
$
where the first term is the epistemic value, capturing the expected information gain as the log ratio between posterior and prior beliefs about states~\cite{smith2022step}. Since it is subtracted, minimizing \ac{EFE} drives the agent to maximize information gain, endowing it with an intrinsically information-seeking nature that compels it to resolve uncertainty before exploiting preferred outcomes.}. 
The first objective is goal-directed, where the \ac{AI} agent favors policies whose actions cause expected observations to align with its preferences. Thereby, the \ac{EFE} penalizes any deviation between what the \ac{AI} agent expects and what it prefers. The second objective minimizes uncertainty, where the \ac{AI} agent is intrinsically motivated to seek states that reduce ambiguity about its world model, naturally balancing exploitation of known preferred states with exploration of uncertain ones.

After connecting the \ac{EFE} to the \ac{POMDP} and \ac{VMP} in Section III, it is now possible to write down the \ac{EFE} from policy $\pi_o$ as the sum of the individual \ac{EFE} over future time instants $t< \tau \leq T$, as follows:
\begin{align}
\label{EFE_0}
    G(\pi_o) =\sum_{\tau=t+1}^T G (\tau, \pi_o) 
    &= \sum_{\tau = t+1}^{T} D_{\mathrm{KL}}\big[ q(o_{\tau} \mid  \pi_o) \,\|\, p(o_{\tau} \mid C) \big] + \mathbb{E}_{q(s_\tau\mid\pi_o)} H\Big[p(o_\tau\mid s_\tau)\Big] \nonumber\\
    &= \sum_{\tau = t+1}^{T}  \boldsymbol{A} \boldsymbol{s}_{\pi_o,\tau} \cdot 
    \Big( \ln \boldsymbol{A} \boldsymbol{s}_{\pi_o,\tau} \;-\; \ln \boldsymbol{c} \Big)
    \;-\; \operatorname{diag}\!\big( \boldsymbol{A}^{T} \ln \boldsymbol{A} \big) \cdot \boldsymbol{s}_{\pi_o,\tau},
\end{align}
where $\boldsymbol{c}$ encodes the prior preferences of the \ac{AI} agent embedded within $\mathcal{W}$. Furthermore, without loss of generality, the \ac{EFE} from the alternative policies  $G(\pi)$ can be determined by replacing $\pi_o$ in~\eqref{EFE_0} with $\pi \in \Pi$. Nevertheless, ensuring an adequate assessment between policies requires that all policies $\pi \in \Pi$ start from the same state at time $t$, i.e.,  $q(s_{t}|\pi_o) = q(s_{t}|\pi)$. To evaluate the policies that are likely to minimize \ac{EFE}, we consider updating their posterior probabilities $q(\pi)$ according to the following corollary.

\begin{corollary}
Given the world model $\mathcal{W}(s_{1:T}, o_{1:T}, \pi)$, the variational update for policies $q(\pi)$ satisfies $\ln q(\pi) \;\propto\; - G(\pi), \forall \pi \in \Pi$.
\begin{IEEEproof}
    See Appendix C.
\end{IEEEproof}
\label{corollary}
\end{corollary}

Unlike conventional policy selection which is fixed after training the update 
$\ln q(\pi) \propto -G(\pi)$ in Corollary~\ref{corollary} is recomputed at test time through variational inference 
over the world model $\mathcal{W}$. This dynamically re-weights policies in response to surprise under the current, unforeseen environmental conditions. This is the inference-driven mechanism underlying test-time scaling, where reasoning can contextualize policy selection of \ac{AI} in favor of minimizing \ac{EFE}.
To oprerationalise this scaling process, we introduce $\boldsymbol{\pi}\triangleq (\pi_1, \pi_2, \ldots, \pi_{|\Pi|})$ to capture the \ac{EFE} of policies $\pi_{i} \in \Pi$. Then, we normalize $\ln q^{*}(\boldsymbol{\pi})$ to become a probability distribution while weighing it with the precision $\gamma$ as $q^{*}(\boldsymbol{\pi}) =  \sigma \big(- \gamma \boldsymbol{G}\big)$, where $\boldsymbol{G} \triangleq [G(\pi_1),  G(\pi_2), \ldots, G(\pi_{|\Pi|})]$. 
Accordingly, we can approximate the term $\exp \left(-\gamma  Q_{\mathscr{S}}(a, s_t) \right)$ in~\eqref{scaling} with
$\psi (a \mid s_t, \boldsymbol{\pi}) =  \sum_{\pi \in \Pi} \delta(a, \pi(t)) q^{*}(\pi), \forall a \in \mathcal{A}$. Then, the network shares this inferred policy $\psi (a \mid s_t, \boldsymbol{\pi})$ with the \ac{AI} agent to scale the policy $\pi_o$, as stated in \eqref{scaling}.

Thus, the \ac{AI} agent will now perform a belief update from its prior policy $\pi_o(a,s_t)$ to acquire a posterior $\pi_o' (a \mid s_t)$ by considering the likelihood that actions minimize \ac{EFE} as scored in $\psi (a \mid s_t, \boldsymbol{\pi})$. Nevertheless, this process should still incorporate the level of surprise that modulates the degree to which beliefs are updated. In addition, it can still be challenging to normalize $\pi_o' (a \mid s_t)$ due to the intractability of the marginal likelihood of future observations, as explained in Section II. To overcome this intractable inference problem, we address it from an equivalent perspective that views the \ac{AI} agent as a random dynamical system in interaction with its environment.
Using state flow methods from statistical physics, we can then analyze the behavior of this system as it maintains \ac{NESS} to survive. In particular, this flow can describe how the policy $\pi_o$ must update based on $\psi (a \mid s_t, \boldsymbol{\pi})$. 
Effectively, we will next show how this approach provides a brain-inspired solution to the scale the policy $\pi_o$ into $\pi_o'$ at test time.

From a physics standpoint, an \ac{AI} agent that generalizes in the physical world can be viewed as a random dynamical system that persists over time\footnote{Although the network is actually the agent performing active inference, it is considered that it is performing it on behalf of the \ac{AI} agent (i.e., autonomous vehicle) that is the central entity that should generalize in the unforeseen scenario here.}. This physical system persists by preserving a statistical boundary that separates it as an entity from its environment (see Fig.~\ref{Markov_Blanket}). In particular, this boundary statistically separates the internal beliefs of the \ac{AI} agent from the external states of the world and is essentially a \emph{Markov blanket}~\cite{koller2009probabilistic}. For our system, a Markov blanket can be defined as follows.

\begin{figure}
	\centering
    \includegraphics[width=\columnwidth]{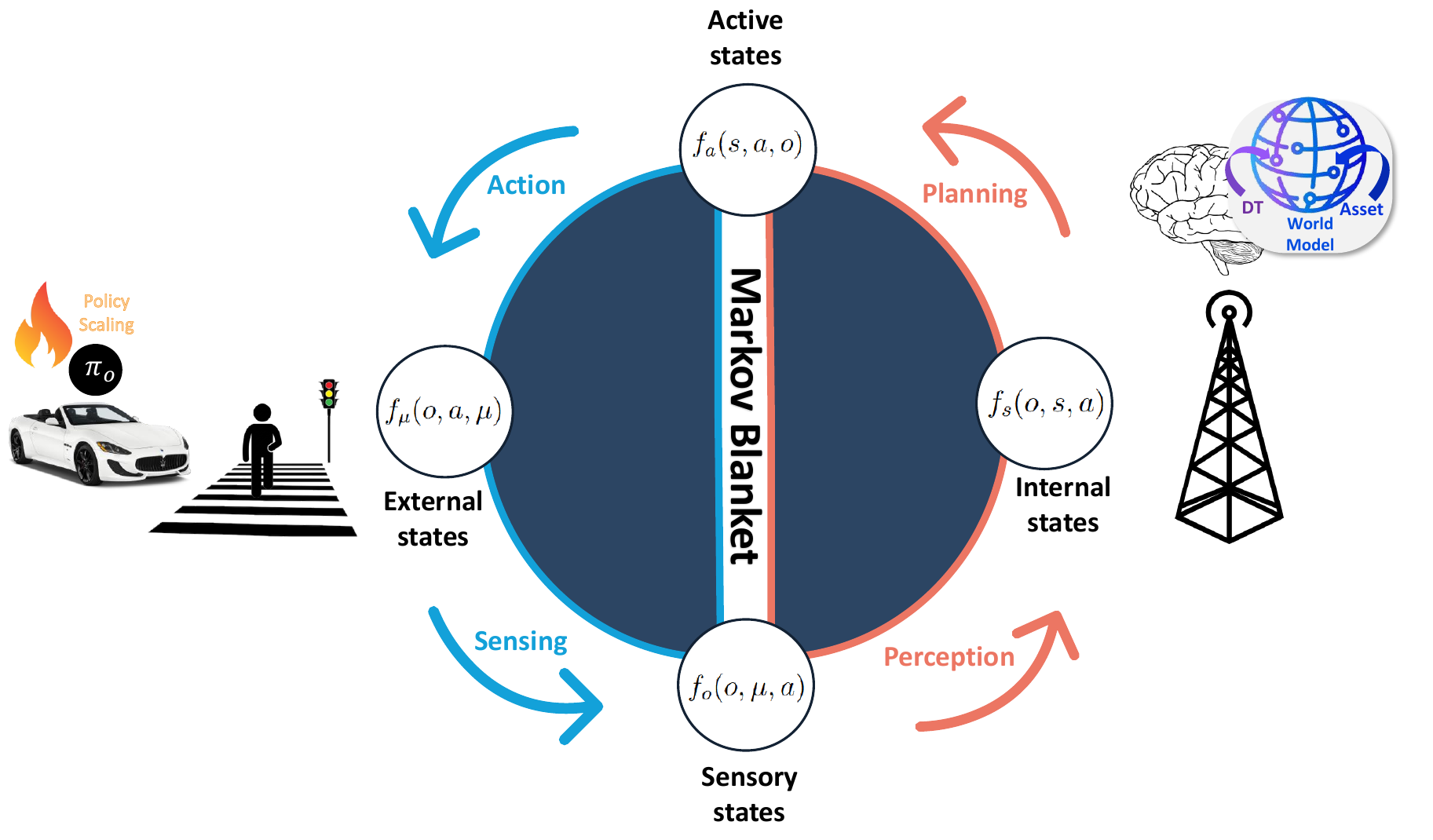}
	\caption{\small{Illustration of the Markov Blanket that statistically separates the \ac{AI} agent from its external environment.}}
	\label{Markov_Blanket}
	\vspace{-0.7cm}
\end{figure}

\begin{definition}
A set of states $b= \{o \in \mathcal{O},a \in \mathcal{A}\}$ forms a \emph{Markov blanket} separating the states of the real-world environment $\mu \in \mathcal{U}$ (i.e., external states) and the beliefs about the states $s \in \mathcal{S}$ (i.e., internal states) iff they are conditionally independent given $b$, i.e., $\mu \perp \!\!\! \perp s \mid b$.
This conditional independence can be expressed as~\cite{da2021bayesian}:
\begin{equation*}
    p(s, \mu \mid b) = p(s \mid b)\,p(\mu \mid b)  \;\;\Longleftrightarrow\;\; p(s \mid b, \mu) = p(s \mid b) 
    \;\;\Longleftrightarrow\;\;  p(\mu \mid b, s) = p(\mu \mid b).
\end{equation*}
\end{definition}

Accordingly, the \ac{AI} agent persists (i.e., survives) in the world by ensuring that this Markov blanket is always maintained. Consequently, this implies that the states of this dynamical system must flow so as to preserve this Markov blanket over time. To formalise this, it is necessary to capture the flow of these states. Here, the state equations of this random dynamical system can be described according to the following differential equation and determined by the corresponding flow (i.e., drift)~$f$: 
\begin{equation}
\dot{x} = f(x) + \omega 
\qquad 
; \qquad f(x) =
\begin{cases}
\begin{bmatrix}
f_{\mu}(o, a, \mu)\\
f_{o}(o, \mu, a) \\
f_{s}(o, s, a) \\
f_{a}(s, a, o)
\end{bmatrix},
\end{cases}
\end{equation}
where $x= \{\mu, o, s, a\}$, $\omega \sim \mathcal{N}(0, \Gamma)$ is a symmetric positive-definite diffusion (i.e., covariance) matrix of the random noise fluctuations from the environment, and $\Gamma$ is the tensor representing half the amplitude of these fluctuations~\cite{palacios2020markov}. Accordingly, the dynamics of this random dynamical system can be formalized in terms of the \emph{Fokker-Planck equation} which governs the temporal evolution of the probability density of the states $p(x,t)$, as follows\footnote{We note that at this moment we can alternatively refer to the Master Equation  to describe the dynamics of the probability density, being a special case of the Fokker-Planck equation~\cite{kaniadakis2018nonlinear}. Thus, the same conclusions can be reached.}:
\begin{equation}
\frac{\partial p(x,t)}{\partial t}  
= - \nabla \cdot \big[ f(x) \, p(x,t) \big] 
+ \nabla \cdot \big[ \Gamma \, \nabla p(x,t) \big].
\label{FPE}
\end{equation}

Here, \eqref{FPE} describes how the probability density over the system's 
states evolves over time under the drift $f(x)$ and diffusion $\Gamma$. Since this density encodes the system's beliefs over its states (i.e., sensory, internal, and active states), the two are directly connected through the same underlying dynamics. Specifically, the drift $f(x)$ plays the role of the belief update rules in~\eqref{perception} and \eqref{scaling}, steering the system toward states that consistently maximize model evidence. This stands in contrast to RL, which optimizes the policy through reward maximization without accounting for the \ac{NESS} structure of the system that needs to be maintained at test time. Thereby, this leaves the underlying stochastic dynamics unmodeled while missing the connection to belief updating of policies in unforeseen scenario at test time.

Since the \ac{AI} agent works to maintain its survival whereby it self-organizes towards its preferences, this means that the dynamical system must conserve its \ac{NESS}. In this case, the probability flow is balanced while having a continuous flux, yet the probability density $p(x,t)$ remains invariant to achieve a steady state. Henceforth, the probability density $ p(x,t)$ converges into an ergodic density $p^*(x)$. Thus, the preferred states in $p^*(x)$ act as a pullback attractor for this dynamical system, encoding the set points where the \ac{AI} agent survives. Effectively, \eqref{FPE} can then be simplified into a stationary density setting. In this case, we can determine the flow of the states in this \ac{NESS} which ensure the persistence of a Markov blanket according to the following lemma.

\begin{lemma}
For a stochastic dynamical system that admits a Markov blanket in \ac{NESS}, the drift $f(x)$ of its states can be decomposed  into:
\begin{equation}
f(x) = -\Gamma \nabla L(x) + R \nabla L(x) = (\Gamma - R) \nabla \ln p^{*}(x),
\end{equation}
where $L(x) = -\ln p^*(x)$ is the Lagrangian (i.e., potential function) associated with the steady-state density $p^*(x)$, and $R$ is an antisymmetric solenoidal operator.
    \begin{IEEEproof}
        See Appendix D.
    \end{IEEEproof}
\label{Helmholtz}
\end{lemma}


From Lemma~\ref{Helmholtz}, we observe that the states of any random dynamical system that preserves a Markov blanket will flow towards their preferences, and thereby, minimize surprise~\cite{friston2014cognitive}. Based on Lemma~\ref{surpise}, the states of this random dynamical system equivalently flow towards minimizing their \ac{VFE} which upper bounds surprise. In particular, the flow $f_{a}(s, a, o)$ must drive the active states $a \in \mathcal{A}$ towards these regions of low free energy, i.e., $f_{a}(s, a, o) = (\Gamma - R) \nabla_a \ln p(s, a, o) \leq (R - \Gamma) \nabla_a F(s, a, o)$.
In the case of a surprise, the active states (which correspond to the actions taken by the agent) are therefore flowing from $\pi_o(a,s_t)$ to $\psi (a \mid s_t, \boldsymbol{\pi})$ to minimize free energy $F(s, a, o)$.
Nevertheless, this transition from $\pi_o(a \mid s_t)$ to $\psi (a \mid s_t, \boldsymbol{\pi})$ should still be modulated according to the magnitude of the surprise $\mathscr{S} (t, \pi_o)$. Effectively, this flow is then equivalent to performing a gradient descent over the free energy landscape while considering the impact of surprise. Here, the free energy in terms of active states is $F(s, a, o) \propto D_{\mathrm{KL}}\!\left(\pi_o' (a\mid s_t) \,\|\, \psi (a \mid s_t, \boldsymbol{\pi})\right)$.
Hence, minimizing $F(s,a ,o)$ is proportional to minimizing the \ac{KL} divergence between the $\pi_o'(a \mid s_t)$ and $\psi (a \mid s_t, \boldsymbol{\pi})$. Following a similar derivation to~\eqref{gradient}, we define the action prediction error as $\epsilon_{\pi_{o}, \psi} =  \nabla_a F(s, a, o) = \ln \psi (a \mid s_t, \boldsymbol{\pi}) - \ln \pi_o' (a \mid s_t)$. 
Here, the posterior beliefs about the actions are obtained through a gradient descent as in~\eqref{iterative}. Starting initially from $\pi_o'(a \mid s_t)   = \pi_o (a \mid s_t) $, the \emph{scaled policy} $\pi'(a \mid s_t)$ can then be reached through an iterative update that consists of the following steps:
\begin{align}
\label{gradient_descent_equations}
 \left\{
\begin{aligned}
    & \epsilon_{\pi_{o}, \psi} \leftarrow  \;  \ln \psi (a \mid s_t, \boldsymbol{\pi}) - \ln \pi_o' (a\mid s_t),  \\
    & \ln \pi_o' (a\mid s_t)  \leftarrow \ln \pi_o' (a\mid s_t) + \epsilon_{\pi_{o}, \psi}, \\ 
    &\pi_o' (a\mid s_t)  \leftarrow \sigma \big(\ln \pi_o' (a \mid s_t)\big).
\end{aligned}
 \right . 
 \end{align}

\begin{figure}
	\centering
    \includegraphics[width=0.85\columnwidth]{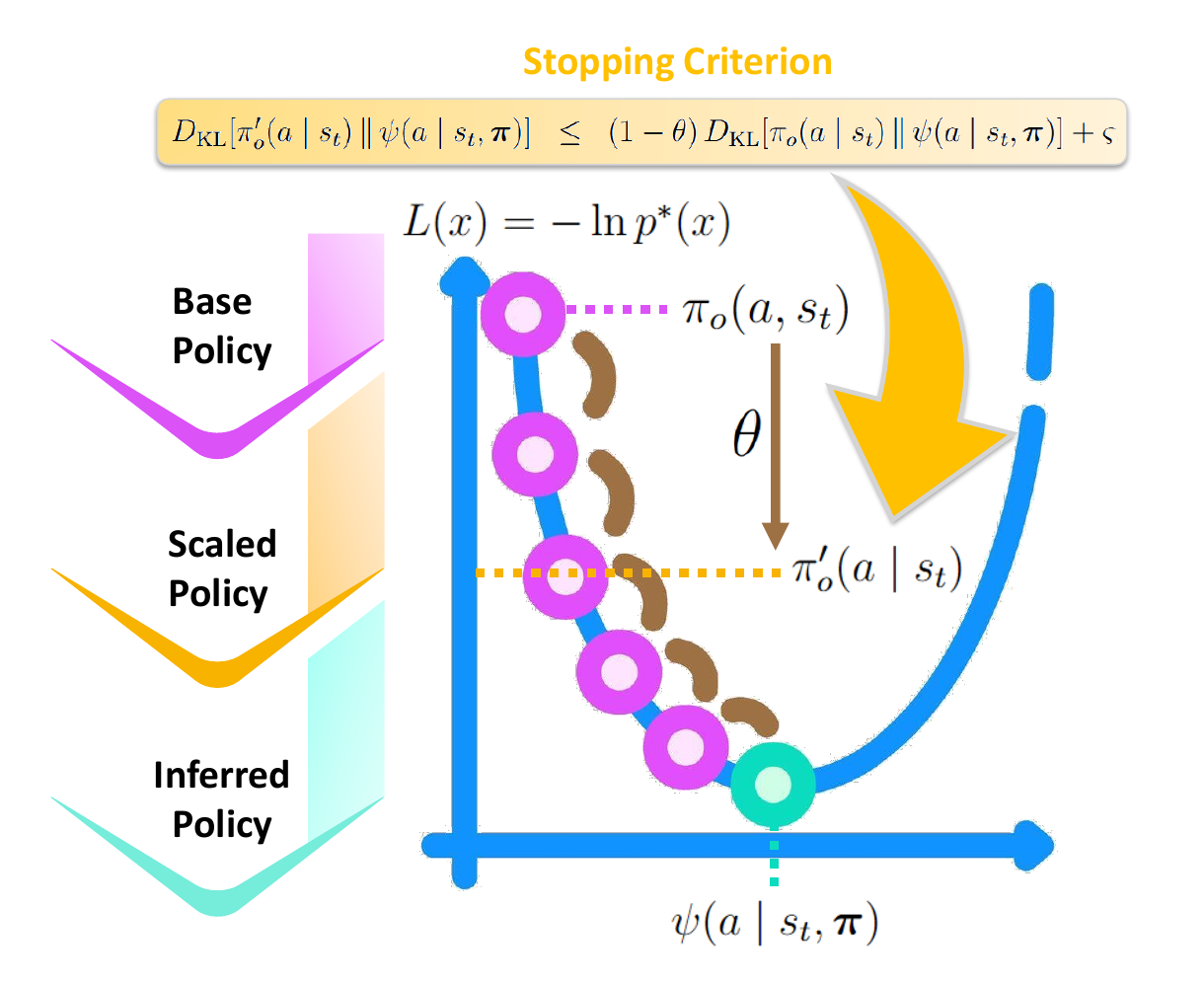}
	\caption{\small{Illustration of the gradient descent operation over the Lagrangian to capture the scaled policy as derived in~\eqref{gradient_descent_equations} and \eqref{stopping_criterion}.}}
	\label{Gradient_Descent}
	\vspace{-0.7cm}
\end{figure}

As this gradient descent operation corresponds to the soft Bayesian update in~\eqref{scaling}, the scaled policy $\pi'_o (a\mid s_t)$ is reached once a stopping criterion is met. This stopping criterion will incorporate the effect of surprise $\mathscr{S} (t, \pi_o)$, whereby the gradient descent stops iterating when the remaining \ac{KL} divergence has been reduced to the surprise-dependent target $\theta$.
This target can be defined through an exponential normalization over surprise, i.e.,  $\theta (\mathscr{S}) = 1-\exp(-\Omega\mathscr{S})$:
\begin{equation}
    D_{\mathrm{KL}}\!\left[\pi_o' (a\mid s_t) \,\|\, \psi (a \mid s_t, \boldsymbol{\pi})\right] 
\;\;\le\;\; (1 - \theta)\, D_{\mathrm{KL}}\!\left[\pi_o (a\mid s_t) \,\|\, \psi (a \mid s_t, \boldsymbol{\pi})\right] + \varsigma,
\label{stopping_criterion}
\end{equation}
where $0 \leq \varsigma <\theta$ is a tolerance error for numerical precision and $\Omega > 0$ is a saturation parameter.
As shown in Fig.~\ref{Gradient_Descent}, this update implements policy scaling via gradient descent, where the prediction error $\epsilon_{\pi_{o}, \psi}$ drives the policy $\pi_o'$ toward the posterior $\psi(a \mid s_t, \boldsymbol{\pi})$ through iterative belief refinement (i.e., inference). This constitutes the operational realization of the test-time scaling law derived in~\eqref{scaling}. It represents the central contribution of the paper, which provides a principled inference-driven mechanism for dynamically scaling the policy at test time in response to prediction errors, without requiring retraining.

However, scaling the policy at test time is insufficient for long-term survival, as it only provides an instant solution for generalization. Over time, the agent must also consolidate these resolved instances into its world model $\mathcal{W}$ and base policy $\pi_o$ to retain this knowledge for future encounters.
\emph{This natural motivation for survival drives the \ac{AI} agent to learn at test time.}
In the following section, we derive how this learning proceeds, enabling the \ac{AI} agent to generalize beyond its training distribution and reduce future surprise by retaining knowledge of previously unseen scenarios.













\section{Learning at Test Time: Updating the World Model and Policy via Inference}
\label{Learning via Inference}

At the end of the time interval $T$, the surprising instances that have been resolved represent an opportunity for the \ac{AI} agent to \emph{``learn by experience"} from the world~\cite{silver2025welcome}. 
It is important to distinguish this from test-time inference. On the one hand, the latter operates on a fast timescale, scaling the policy through a belief update driven by surprise without modifying any model parameters. On the other hand, learning operates on a slower timescale by performing parameter updates to the world model $\mathcal{W}$ and policy $\pi_o$ to permanently incorporate the resolved instances.
The goal of this learning is to incorporate these instances into the world model $\mathcal{W}$ and policy $\pi_o$ such that they become more predictable by the agent and \emph{less surprising in the future}. In other words, the \ac{AI} agent becomes less susceptible to similar situations when encountered again, whereby action selection can progressively shift from being dominated by deliberative reasoning to a fast, feed-forward decision. This update process is a natural consequence of the survival objective that intrinsically drives \ac{AI} agents to continuously learn at test time.  

Unlike perception, planning, and action that require inference to perform instantaneous belief updates, learning is associated with synaptic plasticity dynamics (i.e., Hebbian plasticity) that usually occur at a lower pace and over a longer time period~\cite{da2020active}. This is due to the fact that learning requires an \emph{accumulation of evidence about the world} to permanently update parameters. On the one hand, the parameters of the likelihood model $\boldsymbol{A}$ and transition model $\boldsymbol{B}$ defined in Section~\ref{Perception} would need to reflect the effect of the surprising pairs of states and observations. On the other hand, the policy $\pi_o$ needs to consolidate the actions sampled from its scaled version $\pi_o'$. Thus, learning can be cast as a \emph{Bayesian update} of these parameters, given observations, inferred hidden states, and scaled actions. Consequently, learning can then be modeled as Bayesian inference over these parameters, in attempt to minimize the prediction errors (i.e., \ac{VFE}). Henceforth, in this section, we will explain how learning via inference enhances its experience through continuous interaction with the world. This learning stage closes the loop between test-time scaling and training-time adaptation~\cite{dupoux2026ai}. In particular, it converts the inference-driven policy updates -- accumulated during deployment -- into permanent parameter updates of the world model $\mathcal{W}$ and policy $\pi_o$, thereby consolidating test-time experience into long-term knowledge.

\subsection{Updating the World Model: Observation Model $\boldsymbol{A}$}
To incorporate the learning process within inference, it is necessary to expand the world model $\mathcal{W}(o_{1:T}, s_{1:T}, \pi)$ in~\eqref{world model} to include the likelihood model $\boldsymbol{A}$ and transition model $\boldsymbol{B}$, as follows\footnote{Subscripts from $\boldsymbol{B}_{\pi, \tau}$ are dropped for simplicity, while noting that $\pi$ is considered to be the base policy $\pi_o$ in this case.}:
\begin{equation}
p(o_{1:T}, s_{1:T}, \pi, \boldsymbol{A}, \boldsymbol{B}) 
=   p(\pi) p( \boldsymbol{A}) p(\boldsymbol{B}) p(s_{1}) \prod_{\tau = 2} ^{T} p(s_\tau \mid s_{\tau-1},\pi, \boldsymbol{B}) \prod_{\tau =1}^{T} p(o_\tau \mid s_\tau, \boldsymbol{A}).
\label{new_world_model}
\end{equation}

By following steps similar to \eqref{MFA_0} -- \eqref{Euler-Lagrange}, we aim to find the optimal posterior $q^*(\boldsymbol{A})$ that minimizes the \ac{VFE}. Here, the \ac{VFE} is captured in its expanded form as:
\begin{equation}
F[q(s_{1:T}, \pi_o, \boldsymbol{A}, \boldsymbol{B})] 
= \mathbb{E}_{q(s_{1:T},  \pi_o, \boldsymbol{A}, \boldsymbol{B})}\!\left[ \ln q(s_{1:T},  \pi_o, \boldsymbol{A}, \boldsymbol{B}) - \ln p(s_{1:T},  o_{1:T}, \pi_o, \boldsymbol{A}, \boldsymbol{B}) \right].
\label{new_VFE}
\end{equation}

\noindent Then, we consider minimizing  $F[q(s_{1:T}, \pi_o, \boldsymbol{A}, \boldsymbol{B})]$ with respect to $\boldsymbol{A}$ under a mean field approximation, i.e., 
$q(s_{1:T}, \pi_o, \boldsymbol{A}, \boldsymbol{B}) \approx q(\pi_o) q(\boldsymbol{A}) q(\boldsymbol{B})\prod_{\tau = 1 }^{T} q(s_{\tau} | \, \pi_o).$
This approximation assumes that the parameters $\boldsymbol{A}$, $\boldsymbol{B}$, and the policy $\pi_o$ are mutually independent, and that the states of the world factorize across time given the policy $\pi_o$. While this renders inference tractable -- by decomposing the joint distribution into independent factors that can be optimized separately -- it neglects conditional dependencies between parameters and states. Hence, it represents a tradeoff between computational tractability and approximation accuracy.
Accordingly, we can rewrite the \ac{VFE} in~\eqref{new_VFE} as follows:
\begin{align}
F[q(\boldsymbol{A})] 
& \propto  \mathbb{E}_{q(\boldsymbol{A})} [\ln q(\boldsymbol{A})] - \mathbb{E}_{q(s_{1:T}, \pi_o, \boldsymbol{A},\boldsymbol{B})}[ \ln p(s_{1:T},  o_{1:T}, \pi_o, \boldsymbol{A}, \boldsymbol{B})] \nonumber\\
& \quad = \mathbb{E}_{q(\boldsymbol{A})} \Big[ \ln q(\boldsymbol{A}) - \underbrace{\mathbb{E}_{q(s_{1:T}, \pi_o, \boldsymbol{B})}[\ln p(s_{1:T},  o_{1:T}, \pi_o, \boldsymbol{A}, \boldsymbol{B})]}_{\ln q^*(A)} \Big] \nonumber\\
& \quad = D_{\mathrm{KL}}\big[ q(\boldsymbol{A})   \,\|\,  q^*(\boldsymbol{A}) \big].
\label{KL-Divergence-A}
\end{align}

\noindent Therefore, the \ac{VFE} can be minimized with respect to $\boldsymbol{A}$ through the following Euler-Lagrange step solution that minimizes the \ac{KL} divergence in~\eqref{KL-Divergence-A} :
\begin{equation}
\ln q^*(\boldsymbol{A}) 
= \mathbb{E}_{q(s_{1:T}, \pi_o, \boldsymbol{B})}[ \ln p(s_{1:T},  o_{1:T}, \pi_o, \boldsymbol{A}, \boldsymbol{B})] 
\label{Euler-Lagrange-A}
\end{equation}
Considering that the terms that depend on $\boldsymbol{A}$ from~\eqref{new_world_model} are the only ones that remain as variables in~\eqref{Euler-Lagrange-A}, we can further simply ~\eqref{Euler-Lagrange-A} that yields:
\begin{equation}
\underbrace{\ln q^*(\boldsymbol{A})}_{\mathrm{posterior}} \propto  \underbrace{\ln p(\boldsymbol{A})}_{\mathrm{prior}} + \underbrace{\sum_{\tau=1}^{T} \mathbb{E}_{q(s_\tau)}[\ln p(o_\tau \mid s_\tau, \boldsymbol{A})]}_{\mathrm{likelihood}}.  
\label{posterior_A}
\end{equation}

As discussed earlier in Section~\ref{Perception}, the likelihood $\boldsymbol{A}$ is formally defined as a categorical distribution, i.,e., \( p(o_\tau \mid s_\tau, \boldsymbol{A}) = \mathrm{Cat}(o_\tau | \boldsymbol{A} \, s_\tau) \). Here, each column \( \boldsymbol{A}_{\cdot j} \) represents a categorical distribution over possible observations $o_\tau$ conditioned on the state \( s_\tau \).
With \(p(o_\tau = i \mid s_\tau = j, \boldsymbol{A}) = A_{ij}\) to index the entries of $\boldsymbol{A}$, the expected log-likelihood in~\eqref{posterior_A} can be modified to become:
\begin{equation}
\mathbb{E}_{q(s_\tau)}[\ln p(o_\tau \mid s_\tau, \boldsymbol{A})] = 
\sum_{j=1}^{|\mathcal{S}|}q(s_\tau = j)\sum_{i=1}^{|\mathcal{O}|}\mathbf{1}_{\{o_\tau=i\}}\ln A_{ij},
\label{likelihood_A}
\end{equation}
where $\mathbf{1}_{\{o_\tau=i\}}$ is an indicator function that equals $1$ when the observed outcome $o_{\tau}$ corresponds to \( i \). 

Subsequently, each column $\boldsymbol{A}_{\cdot j}$ of the likelihood matrix $\boldsymbol{A}$ is parameterized by a Dirichlet distribution, i.e., \(p(\boldsymbol{A}_{\cdot j}) = \mathrm{Dir}(\boldsymbol
{A}_{\cdot j} \mid \boldsymbol{\alpha}_{\cdot j}) \) with concentration parameters $\boldsymbol{\alpha}_{\cdot j} = [\alpha_{1j}, \ldots, \alpha_{|\mathcal{O}|j}]^{\top}$. Here, each $\alpha_{ij} \in \mathbb{R}_{>0}$  denotes a prior pseudo-count over observation $i$ given state \( j \).
Assuming that the columns of $ \boldsymbol{A}$ are independent, \(p(\boldsymbol{A})\) can be modeled as a product of Dirichlet distributions, as follows:
\begin{equation}
p(\boldsymbol{A}) = \prod_{j=1}^{|\mathcal{S}|}\mathrm{Dir}(\boldsymbol{A}_{\cdot j} \mid \boldsymbol{\alpha}_{\cdot j})
= \prod_{j=1}^{|\mathcal{S}|}\frac{1}{B(\boldsymbol{\alpha}_{\cdot j})}\prod_{i=1}^{|\mathcal{O}|}A_{ij}^{\alpha_{ij}-1},
\label{prior_A}
\end{equation}
where $B(\boldsymbol{\alpha}_{\cdot j}) =  \dfrac{\prod_{i=1}^{|\mathcal{O}|} \Psi(\alpha_{ij})}{
\Psi\!(\sum_{i=1}^{|\mathcal{O}|} \alpha_{ij})}$ is the multivariate Beta function and $\Psi(\cdot)$ is the Gamma function. 
Then, substituting~\eqref{likelihood_A} and~\eqref{prior_A} back in~\eqref{posterior_A} yields:
\begin{align}
\ln q^*(\boldsymbol{A}) &\propto
 \sum_{i=1}^{|\mathcal{O}|} \sum_{j=1}^{|\mathcal{S}|}
\left[
\alpha_{ij} - 1 + \sum_{\tau=1}^{T}\mathbf{1}_{\{o_\tau=i\}}q(s_\tau=j) 
\right]\ln A_{ij},
\end{align} 
which is recognized as the logarithm of a Dirichlet distribution whose concentration parameters are $\alpha_{ij}^{'} \triangleq \alpha_{ij} + \sum_{\tau=1}^{T}\mathbf{1}_{\{o_\tau=i\}}q(s_\tau=j)$.
Henceforth, this yields a posterior Dirichlet distribution which factorizes into the form:
\begin{equation}
q^*(\boldsymbol{A}) = \prod_{j=1}^{|\mathcal{S}|}
\mathrm{Dir}\!\left(\boldsymbol{A}_{\cdot j} \,\middle|\, 
\boldsymbol{\alpha}_{\cdot j}^{\prime}\right),
\end{equation}
where $\boldsymbol{\alpha}_{\cdot j}^{\prime} \triangleq [\alpha_{1j}', \ldots, \alpha_{|\mathcal{O}|j}']^{\top}$ represent the updated concentration parameters over each column $\boldsymbol{A}_{\cdot j}$. Intuitively, learning thus corresponds to updating the prior Dirichlet parameters by accumulating expected co-occurrences between sensory observations $i$ and inferred hidden states~$j$. Consequently, this increases the corresponding concentration parameter \( \alpha_{i j} \). Equivalently, learning can be viewed as updating the concentration parameters of the Dirichlet distribution by accumulating soft counts of outcome--state co-occurrences over time, weighted by the inferred posterior over hidden states $q^*(s_\tau)$.

\subsection{Updating the World Model: Transition Model $\boldsymbol{B}$}
Similar to the derivation for the likelihood parameters $\boldsymbol{A}$ in~\eqref{KL-Divergence-A} - \eqref{posterior_A}, we can now derive the posterior over the transition probabilities $\boldsymbol{B}$. 
To minimize the \ac{VFE} in~\eqref{new_VFE} with respect to $\boldsymbol{B}$, the terms that depend on $\boldsymbol{B}$ are only those involving $p(\boldsymbol{B})$ and $p(s_{\tau} \mid s_{\tau-1}, \boldsymbol{B})$. Hence, we can derive the Euler-Lagrange solution for $\boldsymbol{B}$ as:
\begin{equation}
\ln q^*(\boldsymbol{B}) \propto  \ln p(\boldsymbol{B}) + \sum_{\tau=2}^{T} \mathbb{E}_{q(s_\tau, s_{\tau-1})}[\ln p(s_\tau \mid s_{\tau-1}, \boldsymbol{B})].   
\label{posterior_B}
\end{equation}
Here, the policy $\pi$ is dropped from log likelihood term in~\eqref{posterior_B} because it is treated as a fixed parameter, having the \ac{AI} agent committed to $\pi_o$. 
Moreover, the transition model is defined as categorical distribution such that $p(s_\tau \mid s_{\tau-1}, \boldsymbol{B}) = \mathrm{Cat}(s_\tau \mid \boldsymbol{B} \, s_{\tau-1})$.
In particular, each column $\boldsymbol{B}_{\cdot j}$ defines a categorical distribution over  states $s_{\tau}$ given the previous state $s_{\tau-1}$, indexed with entries $p(s_{\tau} =i \mid s_{\tau-1} = j, \boldsymbol{B}) = B_{ij}$.
Therefore, the expected log likelihood term in~\eqref{posterior_B} becomes:
\begin{equation}
\mathbb{E}_{q(s_\tau, s_{\tau-1})} \!\left[ \ln p(s_\tau \mid s_{\tau-1}, \boldsymbol{B}) \right]
= \sum_{j=1}^{|\mathcal{S}|} \sum_{i=1}^{|\mathcal{S}|} q(s_{\tau-1} = j, s_\tau = i) \ln B_{ij}.
\end{equation}

Then, we assume a Dirichlet prior over each column of the transition matrix, i.e.,
$p(\boldsymbol{B}_{\cdot j}) = \mathrm{Dir}(\boldsymbol{B}_{\cdot j} \mid \boldsymbol{\beta}_{\cdot j})$ having concentration parameters $\boldsymbol{\beta}_{\cdot j}= [\beta_{1j}, \ldots, \beta_{|\mathcal{S}|j}]^{\top}$.
With $p(\boldsymbol{B})$ having a similar form to~\eqref{prior_A}, this transforms~\eqref{posterior_B} into:
\begin{equation}
\ln q^*(\boldsymbol{B}) \propto
\sum_{j=1}^{S} \sum_{i=1}^{S}
\left( \beta_{ij} - 1 + \sum_{\tau=2}^{T} q(s_{\tau-1} = j, s_\tau = i) \right) \ln B_{ij},
\end{equation}
which is recognized as the logarithm of a Dirichlet distribution whose concentration parameters are $\beta_{ij}^{'} \triangleq \beta_{ij} + \sum_{\tau=2}^{T} q(s_{\tau-1} = j, s_\tau = i)$. Thus, this yields a posterior Dirichlet distribution for  $\boldsymbol{B}$ with concentration parameters $\boldsymbol{\beta}_{\cdot j}' = [\beta_{1j}', \ldots, \beta_{|\mathcal{S}|j}']^{\top}$ over each column $j$. Henceforth,  $q^*(\boldsymbol{B})$ can be factorized into the following form:
\begin{equation}
q^*(\boldsymbol{B}) = \prod_{j=1}^{|\mathcal{S}|}
\mathrm{Dir} (\boldsymbol{B}_{\cdot j} \mid \, \boldsymbol{\beta}_{\cdot j}').
\end{equation}

Next, we will derive how the policy $\pi_o$ can be updated to incorporate the resolved, surprising situations by consolidating the actions acquired through reasoning from the scaled policy $\pi_o'$.


\subsection{Updating the Policy $\pi_o$}
As the likelihood and transition parameters (i.e., $\boldsymbol{A}$ and $\boldsymbol{B}$, respectively) of the world model are updated to better anticipate the surprising situations, the policy $\pi_o$ must also adapt to incorporate the actions that successfully resolve these new situations. This consolidation is a key conceptual step, whereby reasoning-derived actions that proved effective in unforeseen scenarios are gradually absorbed into the habitual base policy $\pi_o$. Effectively, this converts the action discovered through deliberative reasoning into effortless, feed-forward behavior. Thus, \emph{learning closes the loop between test-time scaling and long-term policy adaptation.}
This learning update can be modeled as an inference process which aims to find the optimal posterior $q^*(\pi_o)$ that minimizes the \ac{VFE} in~\eqref{new_VFE}. 
Similar to~\eqref{Euler-Lagrange-A}, the Euler-Lagrange step that minimizes~\eqref{new_VFE} with respect to $\pi_o$ is found to be:
\begin{equation}
\ln q^*(\pi_o)
= \mathbb{E}_{q(s_{1:T}, \boldsymbol{A}, \boldsymbol{B}))}[\ln  p(o_{1:T}, s_{1:T}, \pi_o, \boldsymbol{A}, \boldsymbol{B})].
\label{Euler-Lagrange-pi_o}
\end{equation}
From~\eqref{new_world_model}, the only terms that remain in~\eqref{Euler-Lagrange-pi_o} are those that depend on $\pi_o$, namely $p(\pi_o)$ and $p(s_\tau \mid s_{\tau-1}, \pi_o, \boldsymbol{B})$. Hence, we can simplify~\eqref{Euler-Lagrange-pi_o} to become: 
\begin{equation}
   \ln q^*(\pi_o)
\propto \ln p(\pi_o)
+ 
 \mathbb{E}_{q(s_{1:T},  \boldsymbol{B})} \left[ \sum_{\tau=2}^{T} \ln p(s_\tau \mid s_{\tau-1}, \pi_o, \boldsymbol{B}) \right].
 \label{Euler-Lagrange-pi_o-simplified}
\end{equation}
To explicitly expose the dependence on $\pi_o$ in~\eqref{Euler-Lagrange-pi_o-simplified}, we separate the process of action selection under policy $\pi_o$ from state transitions driven by action $a_\tau$, such that\footnote{The policy $\pi_o$ encodes the probability of selecting each action in a given state, while the transition model $p(s_{\tau} \mid s_{\tau-1}, a_{\tau}, \boldsymbol{B})$ captures how those actions influence the subsequent states.}:
\begin{equation}
p(s_{\tau} \mid s_{\tau-1}, \pi_o,  \boldsymbol{B})
= \sum_{a_{\tau} \in \mathcal{A}} \underbrace{p(s_{\tau} \mid s_{\tau-1}, a_{\tau},  \boldsymbol{B})}_{\mathrm{state\,transition}} \, \underbrace{p(a_{\tau} \mid s_{\tau-1}, \pi_o)}_{\mathrm{action\,selection}}.
\label{decomposition}
\end{equation}

Given that the sequence of actions $a_{1:T}$ have been selected and executed during time $T$, the summation in~\eqref{decomposition} can be reduced to a single instance. 
In addition, the terms that depend on $\pi_o$ in~\eqref{decomposition} effectively reduce (i.e., $p(s_{\tau} \mid s_{\tau-1}, a_{\tau},  \boldsymbol{B})$ does not depend on $\pi_o$) to the policy likelihoods $p(a_\tau \mid s_{\tau-1}, \pi_o)$. Thus, \eqref{Euler-Lagrange-pi_o-simplified} can be expressed as:
\begin{align}
\ln q^*(\pi_o) 
& \propto \ln p(\pi_o) + \mathbb{E}_{q(s_{1:T}, a_{1:T})} \left[\sum_{\tau=2}^{T} \ln  p(s_{\tau} \mid s_{\tau-1}, a_{\tau},  \boldsymbol{B}) p(a_\tau \mid s_{\tau-1}, \pi_o)\right] \nonumber \\
& \quad = \ln p(\pi_o) + \mathbb{E}_{q(s_{1:T}, a_{1:T})}
\left[\sum_{\tau=2}^{T} \ln p(a_\tau \mid s_{\tau-1}, \pi_o)\right],
\label{Reduced_Lagrangian_pi}
\end{align}
where the expectation $\mathbb{E}_{q(\cdot)}$ is taken over the actions $a_{1:T}$ that are included into the world model in~\eqref{new_world_model}. 
Then, the policy likelihood\footnote{The policy likelihood $p(a_\tau \mid s_{\tau-1}, \pi_o)$ in the \ac{PFC} has an equivalent role to the policy $\pi_o$ in the \ac{BG}.} is defined as a categorical distribution over actions given the previous state, i.e., $p(a_\tau \mid s_{\tau-1}, \pi_o) = \mathrm{Cat}(a_\tau \mid \Tilde{\boldsymbol{\Pi}} s_{\tau-1})$. Here, $\Tilde{\boldsymbol{\Pi}}$ is the likelihood matrix of choosing action $a_\tau$ state in $s_{\tau-1}$ under policy $\pi_o$.
Thus, each column $\Tilde{\boldsymbol{\Pi}}_{\cdot j}$ defines a categorical distribution over actions given previous states, indexed with the entries $p(a_\tau =i \mid s_{\tau-1} = j, \pi_o) = \Tilde{\Pi}_{ij}$. Therefore, the expected policy likelihood term in~\eqref{Reduced_Lagrangian_pi} can be expressed as: 
\begin{equation}
\mathbb{E}_{q(s_{1:T},  a_{1:T})}
\left[\sum_{\tau=2}^{T}\ln p(a_\tau \mid s_{\tau-1}, \pi_o)\right]
= \sum_{\tau=2}^{T} \sum_{j = 1}^{|\mathcal{S}|} \sum_{i = 1}^{|\mathcal{A}|} q(s_{\tau-1}=j)\,\mathbf{1}_{\{a_\tau=i\}} \ln \Tilde{\Pi}_{ij},
\end{equation}
whereby it represents the expected number of times action $a_\tau = i$ was taken in state $s_{\tau-1} = j$, given inferred states. Moreover, we assume a Dirichlet prior over each column $\Tilde{\boldsymbol{\Pi}}_{\cdot j}$ such that $p(\Tilde{\boldsymbol{\Pi}}_{\cdot j}) = \mathrm{Dir}(\Tilde{\boldsymbol{\Pi}}_{\cdot j} \mid \boldsymbol{\nu}_{\cdot j})$ having concentration parameters $\boldsymbol{\nu}_{\cdot j} = [\nu_{1j}, \ldots, \nu_{|\mathcal{A}|j}]^{\top}$. With $p(\pi_o) = p(\Tilde{\boldsymbol{\Pi}}) = \prod_{j=1}^{|S|} \mathrm{Dir}(\Tilde{\boldsymbol{\Pi}}_{\cdot j} \mid \boldsymbol{\nu}_{\cdot j})$, \eqref{Reduced_Lagrangian_pi} transforms into:
\begin{equation}
\ln q^*(\pi_o)
\propto \sum_{j = 1}^{|\mathcal{S}|} \sum_{i = 1}^{|\mathcal{A}|} \big(\nu_{ij} - 1 + \sum_{\tau=2}^{T} q(s_{\tau-1}=j)\,\mathbf{1}_{\{a_\tau=i\}}\big)\ln \Tilde{\Pi}_{ij},
\end{equation}
which corresponds to a logarithmic of a product of Dirichlet distributions with concentration parameters $\nu_{ij}' = \nu_{ij} + \sum_{\tau=2}^{T} q(s_{\tau-1}=j)\,\mathbf{1}_{\{a_\tau=i\}}$. Therefore, this yields a posterior Dirichlet distribution with concentration parameters $\boldsymbol{\nu}_{\cdot j}' = [\nu_{1j}', \ldots, \nu_{|\mathcal{A}|j}']$ over each column $j$, which factorizes into:
\begin{equation}
q^*(\pi_o)
= \prod_{j=1}^{|\mathcal{S}|} \mathrm{Dir}(\Tilde{\boldsymbol{\Pi}}_{\cdot j} \mid \boldsymbol{\nu}_{\cdot j}').
\label{updated_policy}
\end{equation}

\begin{figure}
	\centering
    \includegraphics[width=\columnwidth]{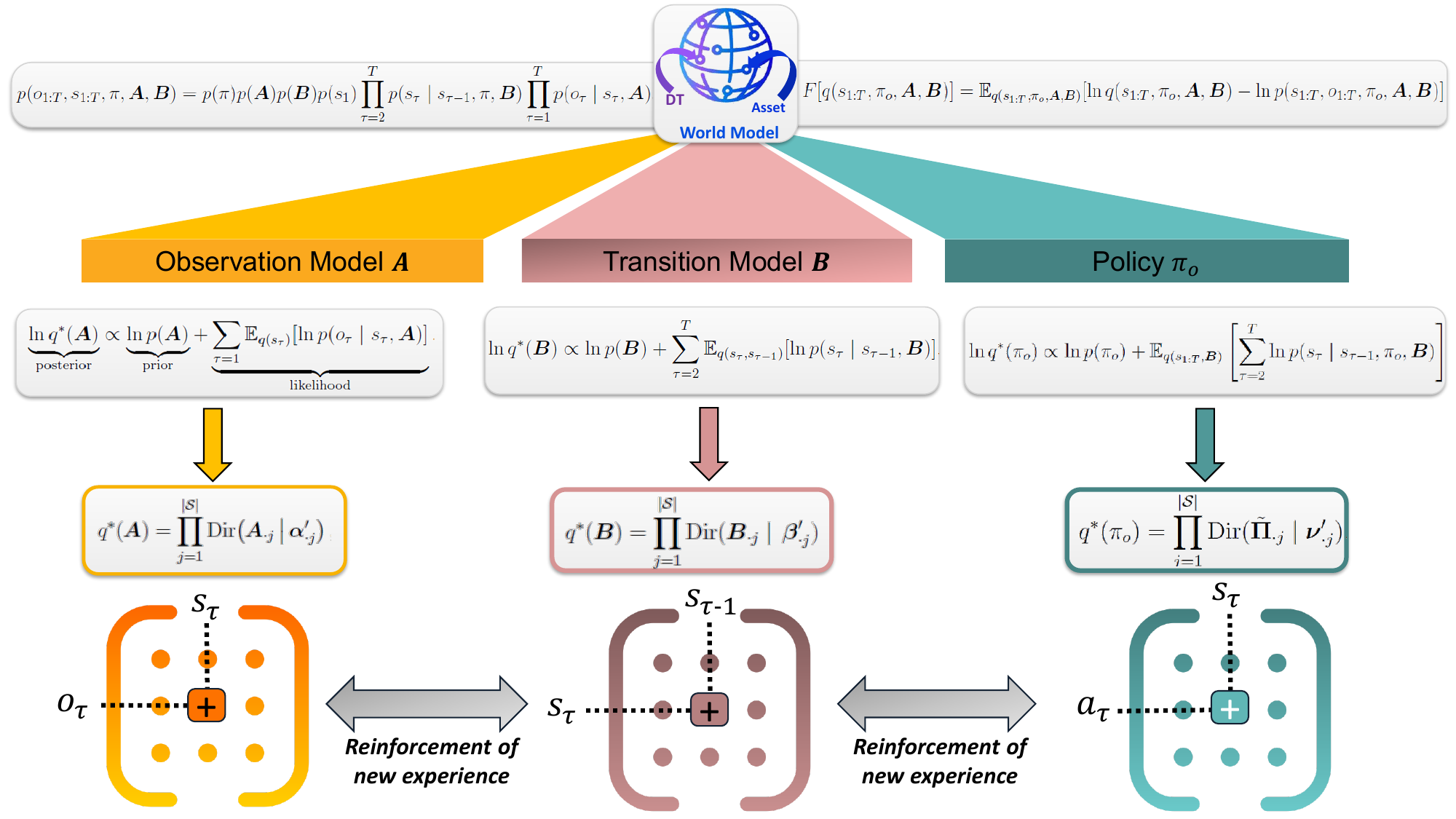}
	\caption{\small{Illustration of the learning process that is modeled as inference to reinforce the new experiences discovered at test time to enable continually learning the policy of the physical \ac{AI} agent and the world model.}}
	\label{Learning}
	\vspace{-0.7cm}
\end{figure}

This showcases how updating the policy $\pi_o$ in the \ac{PFC} corresponds to a Dirichlet accumulation of state--action evidence which \emph{reinforces} this new behavior discovered via reasoning (i.e., \ac{VFE} minimization). In fact, it shows how prior pseudo-counts $\nu_{ij}$ are transformed with empirical, surprising experience into the posterior $\nu_{ij}'$. Subsequently, the inferred posterior policy thus moves from the \ac{PFC} to the \ac{BG} to reflect this update on the policy $\pi_o$. \emph{Remarkably, this observation demonstrates how \acp{DT} play a crucial role in autonomously updating the policy of the physical \ac{AI} agent in the real world}\footnote{Alternatively, in a deep learning scenario, this Bayesian learning update corresponds to a continual learning solution. Therein, the prior policy is updated with new accumulated evidence from the physical world (e.g., via elastic weight consolidation~\cite{hashash2022edge}).}.
While \acp{DT} indeed drive the world model $\mathcal{W}$ to instill reasoning in physical \ac{AI} agents so as to generalize at test time, learning shows that the connection between the \ac{PT} and \acp{DT} must also be used to update the policy $\pi_o$ as a natural consequence of survival.
Clearly, learning exploits the same \ac{PFC}-\ac{BG} connection used during test-time scaling. The resulting policy update in the \ac{PFC} in~\eqref{updated_policy} is then reflected back to the \ac{BG}. This reduces the prediction errors upon future encounters with similar new situations\footnote{Notably, more comprehensive solutions for identifying unforeseen scenarios that do not fall under any state $s_\tau \in \mathcal{S}$ would need to consider methods like Bayesian model expansion as a potential treatment, which is outside the scope of this work.}. In fact, \emph{this learning can compensate for situations that were not essentially covered throughout the training time, but with a higher cost of deliberative reasoning at test time.} 
This learning procedure that reinforces the new experiences and updates the parameters of the world model is summarized in Fig.~\ref{Learning}. 
These nested \ac{VFE} minimizing processes broadly capture the basic nature of how intelligent systems adapt to the changing environments by changing their behavior in a continuous manner to ensure their ``adaptive fitness", where ``fit" can be read as world model fitting, via active inference.


\section{Simulation Results and Analysis}
 
For our simulations, we consider a physical \ac{AI} agent to be an autonomous vehicle that has a driving task objective. 
This experiment follows the example of Fig.~\ref{Passive_to_Active_Inference}.
In particular, our simulation environment models a vehicle approaching an intersection to cross towards its destination goal, given that pedestrians may appear at the traffic light intersection.
Accordingly, the state space $\mathcal{S}$ comprises distance $d \in \{0,1,2,3\}$ that measures the distance to the destination, velocity of the vehicle $v \in \{0,1,2,3\}$, traffic light state $L \in \{0,1\}$ to represent red and green respectively, and pedestrian presence $p \in \{0,1\}$, yielding 64 discrete states. The action space $\mathcal{A}$ consists of three actions that control $v$: \emph{maintain}, \emph{speed up}, and \emph{slow down}. The reward function (i.e., prior preferences) encourages rapid goal approach with a crash penalty for collisions with pedestrians at the intersection at $d=1$. In essence, the vehicular agent learns a distribution where pedestrians appear exclusively at red lights. Accordingly, an unforeseen scenario occurs at test time when pedestrians \emph{jaywalk} i.e., $p=1$ when $L$ is green.
This setup constitutes a controlled out-of-distribution test-time generalization benchmark, where the jaywalking scenario represents a deliberate distributional shift from the training distribution. Hence, this setup allows a rigorous evaluation of the robustness and generalization capabilities of the  \ac{AI} agent under unforeseen conditions. As discussed earlier, this is a meaningful use case and example that can capture the broader problem facing physical \ac{AI} agents at test time and can cover other diverse examples with \ac{AI} agents that are performing different tasks.

In our experiments, we compare our proposed test-time scaling solution with two benchmarks: 
a) Q-learning and b) Bayesian \ac{RL}. In particular, Q-learning serves as model-free approach, while Bayesian \ac{RL} is typically a model-based approach that relies on inference-based planning. 
It is worth noting that all methods share the same training data, reward function, 
and environment assumptions, ensuring a fair comparison that isolates the effect of 
test-time scaling as the differentiating factor between the approaches evaluated.
Our test-time scaling solution builds on Q-learning as its underlying base policy. Both policies and the world model are trained over 6000 episodes. In addition, the values in the policies are initialized with a bias to reflect the velocity preference towards the goal in the reward structure. For simplicity, the world model is simplified into a Markov decision process with surprise measured from the state space. Unless otherwise stated, we set the parameters $\Omega = 0.425$, $\epsilon = 0.2$, $\gamma = 0.5$, and $\varsigma=0.05$.
	\begin{figure}[t!]
		\begin{minipage}{0.49\linewidth}
			\centering
			\includegraphics[scale=0.55]{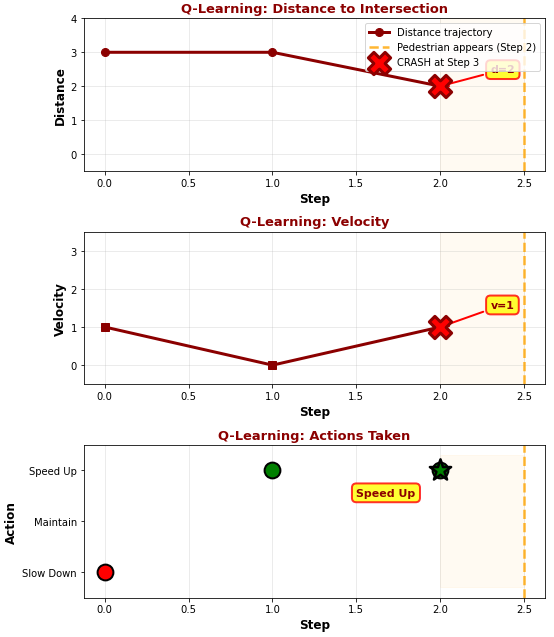}
			\subcaption{}    \label{comparison_Q}
		\end{minipage}
		\begin{minipage}{0.49\linewidth}
			\centering
			\includegraphics[scale=0.55]{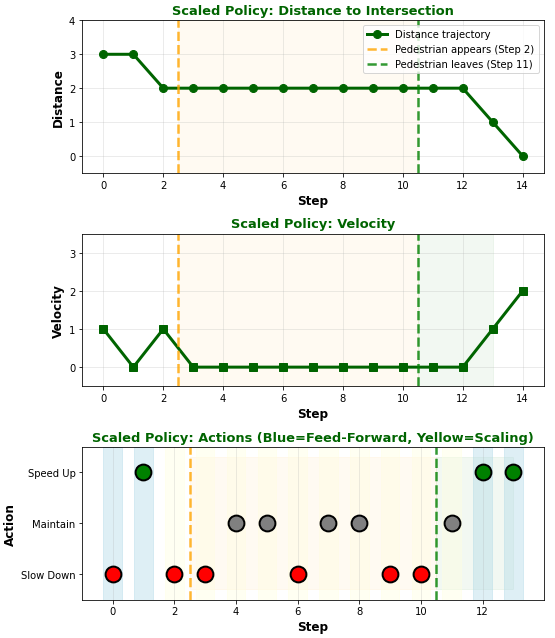}
			\subcaption{}    \label{comparison_Scaling}
		\end{minipage}
		\caption{\small{Comparison between (a) Q-learning and (b) the proposed test-time scaling on an unforeseen jaywalking scenario}. In (a), the Q-learning agent crashes at Step $2$ upon the pedestrian's appearance, failing to adapt its policy. In (b), the proposed scaling law detects the pedestrian as an unforeseen scenario at Step $2$, triggering a switch from feed-forward (blue) to scaled (yellow) actions, allowing the agent to safely navigate the scenario and resume its original policy once the pedestrian clears at Step $11$.}  \label{comparison_Q_Scaling}
		\vspace{-0.3cm}
	\end{figure}

Fig.~\ref{comparison_Q_Scaling} evaluates the performance of the baseline Q-learning policy and the proposed scaled policy in an unforeseen scenario.
As shown in Fig.~\ref{comparison_Q}, the vehicle starts its episode at $d=3$ and initial velocity $v=1$.
As the agent moves forward, an unforeseen scenario appears at step~2. In particular, the pedestrian appears when the agent is approximately at $d=2$ from the goal and traveling at velocity $v=1$. For Q-learning, the agent continues to accelerate even with the unforeseen scenario. Typically, this reflects the overconfident generalization whereby the agent's learned association between green traffic lights and safe acceleration, reinforced across training episodes, dominates in this new pedestrian situation. Eventually, this leads to a collision at step~$3$ and episode termination. As a result, Q-learning fails to complete the episode under this unseen condition. 
Similarly, the pedestrian is encountered at the same step and distance in the case of the proposed, scaled policy, as shown in Fig.~\ref{comparison_Scaling}. Nevertheless, the agent detects the unforeseen scenario. Accordingly, the agent reasons to minimize its \ac{VFE}. In contrast to the Q-learning solution, the agent scales its policy to slow down its velocity to $v=0$ . The agent then remains static for roughly eight steps (steps~$3$--$11$) while the pedestrian crosses to avoid any collision (which can increase \ac{VFE}). Once the pedestrian clears at step~11, the agent resumes acceleration and reaches the destination at step~14. 
Overall, the agent completes the task with no collisions in case of test-time scaling, whereas Q-learning fails. This showcases the ability of the scaled policy to achieve zero-shot generalization and adapt safely at test-time, in contrast to the brittle behavior of standard Q-learning. 
Most importantly, this improvement stems not from retraining or additional data, but from surprise-triggered reasoning that temporarily overrides the habitual base policy $\pi_o$ upon detecting the out-of-distribution pedestrian. Thus, this demonstrates that test-time scaling alone is sufficient to handle unforeseen scenarios.

\begin{figure}[t!]
	\begin{minipage}{\linewidth}
		\centering
		\includegraphics[scale=0.95]{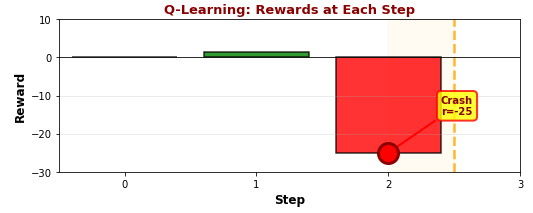}
		\subcaption{} \label{Rewards_Q}
	\end{minipage}

	\vspace{0.3cm}

	\begin{minipage}{\linewidth}
		\centering
		\includegraphics[scale=0.92]{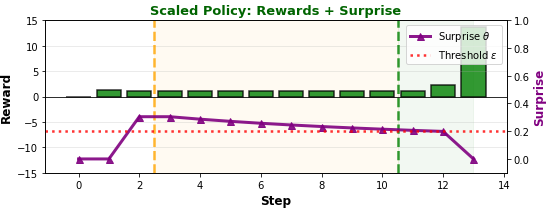}
		\subcaption{} \label{Rewards_Surprise_Scaling}
	\end{minipage}

	\caption{\small{Comparison between the rewards of (a) Q-learning and (b) the proposed test-time scaling over the time steps in the unforeseen scenario, with the variation of surprise levels during policy scaling in this scenario}. }
	\label{Comparison_Rewards}
	\vspace{-0.3cm}
\end{figure}

Fig.~\ref{Comparison_Rewards} shows how the rewards of Q-learning and scaled policy progress over time steps $\tau$. In particular, both solutions share the same rewards across the first two steps. However, as the Q-learning policy fails to generalize in response to the unforeseen scenario, it results in collision with the pedestrian incurring a penalty $r = -25$ at $\tau =2$. This is illustrated in Fig.~\ref{Rewards_Q}. This is due to the fact that the policy is dominated by its task reward. Effectively, this terminates the episode. In contrast, Fig.~\ref{Rewards_Surprise_Scaling} shows that surprise $\theta$ in our proposed method remains below the threshold $\epsilon$ for $\tau < 2$, as predicted perceived states match the transitions distribution acquired during training\footnote{In this work, $\epsilon$ is treated as a fixed threshold for simplicity. However, in general it may not need to be constant, as neuroscience suggests that the threshold for triggering deliberative reasoning is modulated by arousal, attention, and prior experience~\cite{bogacz2010neural}. This leaves the possibility of a learned or dynamically adapted $\epsilon$ as future work.}. However, as the pedestrian appears at $\tau=2$ having $L=1$, surprise abruptly increases to reach $\theta = 0.29$ and crosses the threshold $\epsilon$. This signals an unforeseen scenario has occurred and for which the agent must reason to generalize. Here, reasoning about the \ac{EFE} penalizes actions leading to non-preferred states (e.g., collision with pedestrian). As shown in Fig.~\ref{comparison_Q_Scaling}, the agent selects to slow down, reducing velocity from $v=1$ to $v=0$. The agent maintains $v=0$ for $3\leq \tau \leq 12$ while surprise remains above the threshold while the pedestrian crosses. At $t=12$, the pedestrian clears and surprise drops to $\theta=0.19 < \epsilon$. Hence, the agent automatically reverts to its Q-learning policy without inference. Subsequently, the agent then accelerates to reach its goal at $\tau=14$ with cumulative reward $r = 22.9$. Clearly, this shows how our proposed test-time scaling method generalizes to unforeseen scenarios while outperforming Q-learning in terms of cumulative rewards that guarantee successfully finishing their episode. This fosters the premise of \ac{RL} which essentially is to maximize the long-term rewards rather than the instantaneous rewards.

\begin{figure}[t!]
	\centering
	\includegraphics[width=0.95\textwidth]{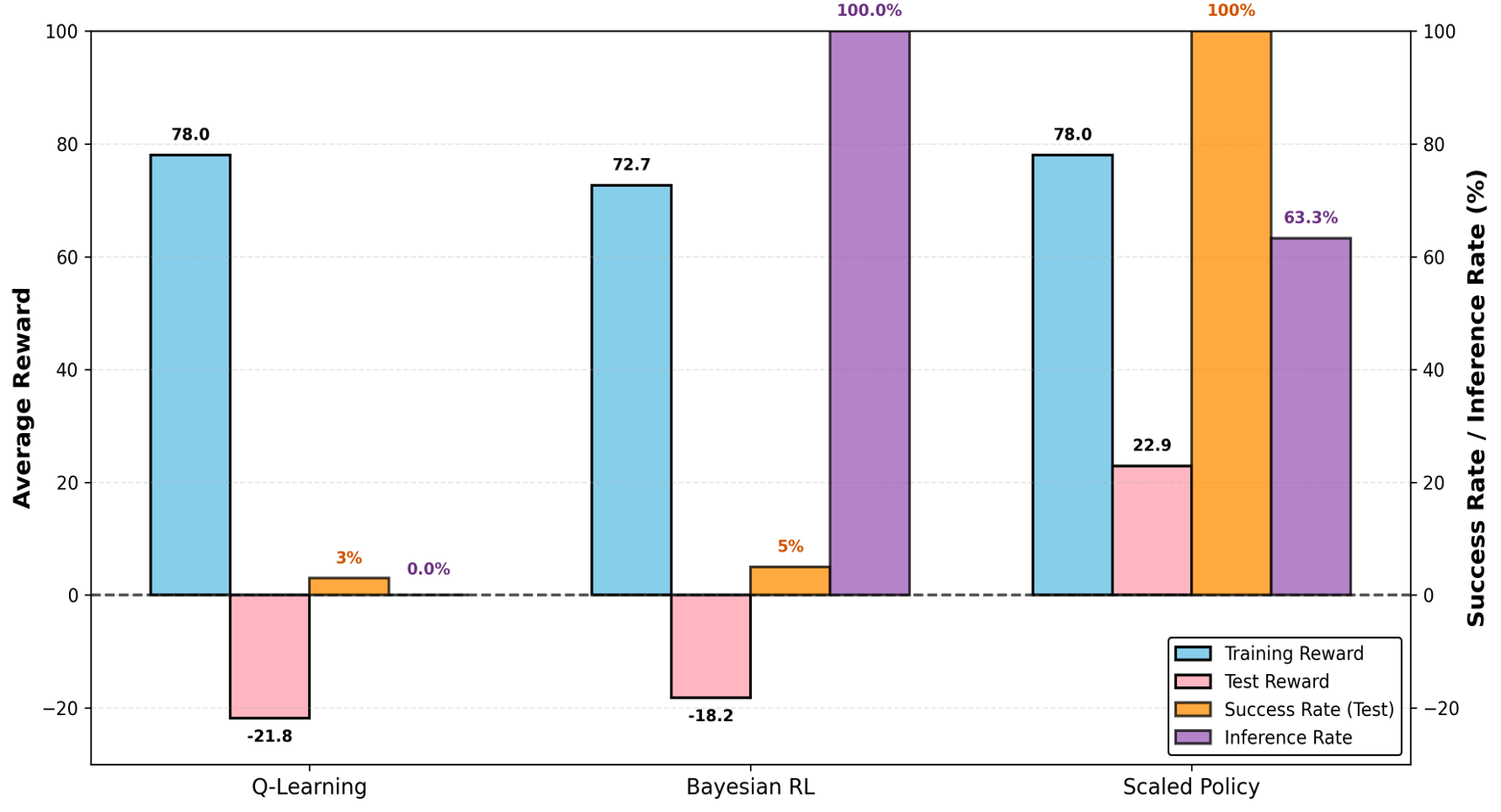}
	\caption{\small{Performance comparison in terms of training rewards, test-time rewards (in unforeseen scenario), success rate, and inference rate between Q-learning, Bayesian \ac{RL}, and the proposed test-time scaling method.}}
	\label{Comparison_All}
		\vspace{-0.4cm}
\end{figure}

Fig.~\ref{Comparison_All} presents a comprehensive performance comparison across three methods: Q-learning, Bayesian \ac{RL}, and our proposed test-time policy scaling approach. The analysis examines training reward, test-time reward, success rate on out-of-distribution scenarios, and inference rate indicating computational efficiency. All three methods achieve comparable training performance, with Q-learning and the proposed scaled policy obtaining average rewards of $78.0$, while Bayesian \ac{RL} achieves $72.7$. On the one hand, this similar performance is due to fact that the scaled policy and Q-learning method utilize the same objective during training, with the world model learned jointly without modifying the reward optimization process. On the other hand, the slight degradation in Bayesian \ac{RL} training performance ($6.8$\% lower than Q-Learning) reflects the computational overhead of maintaining uncertainty estimates during exploration, which constrains policy optimization efficiency.

Nevertheless, Q-Learning and Bayesian \ac{RL} still largely fail to adapt to the unforeseen scenario at test-time, as shown in Fig.~\ref{Comparison_All}. This results in an average cumulative reward of $-21.8$ and $3$\% success rate for Q-learning, as it lacks the necessary experience in dealing with this new situation. However, Bayesian \ac{RL} demonstrates marginal improvement with test reward of $-18.2$ and $5$\% success rate, indicating that maintaining uncertainty estimates alone provides insufficient robustness without explicitly acquiring surprise-based inference.
In contrast, the proposed test-time scaling method outperforms both Q-learning and Bayesian \ac{RL} by achieving a reward $r = 22.9$ with 100\% success rate. This performance gain directly stems from the surprise (i.e., \ac{VFE}) detection mechanism that automatically triggers active inference and reasoning about minimizing the \ac{EFE} when encountering unforeseen transitions. This preserves the agent from blindly following the policy in unforeseen scenarios, whereby the agent can survive in these instances by \emph{sacrificing its narrow task rewards} to resolve its uncertainty about the world. This is clearly reflected in the drop of rewards in the scaled policy method to 22.9 at test time while achieving a $100\%$ success rate over episodes within the evaluated environment and setup, which demonstrates how world models combined with inference-based compute scaling enable robust generalization without retraining.
That said, the proposed scaling method requires only $63.3\%$ inference rate in comparison to Bayesian \ac{RL}. Indeed, this corresponds to the 8 timestep window (steps 3--10 out of 14 total) where surprise exceeds threshold $\epsilon$ during the jaywalking scenario and requires reasoning therein. Hence, test-time scaling is able to achieve a computational tradeoff that balances between efficient, model-free Q-learning (i.e., no inference overhead) and model-based Bayesian \ac{RL}. Hence, the proposed scaling method efficiently utilizes inference compute with over $36\%$ enhancement in contrast to Bayesian \ac{RL} which completely depends on inference. 
This efficiency gain reflects our central contribution, i.e., by scaling inference compute only when surprise is detected, the proposed method allocates deliberative reasoning precisely when it is needed, avoiding the constant computational overhead of full Bayesian \ac{RL} while retaining generalization capabilities.
Thereby, these results demonstrate that adaptive compute scaling enables safe navigation of unforeseen scenarios, while preserving efficiency on familiar states by limiting computational overhead.

\begin{figure}[t!]
	\centering
	\begin{minipage}{0.49\linewidth}
		\includegraphics[scale=0.53]{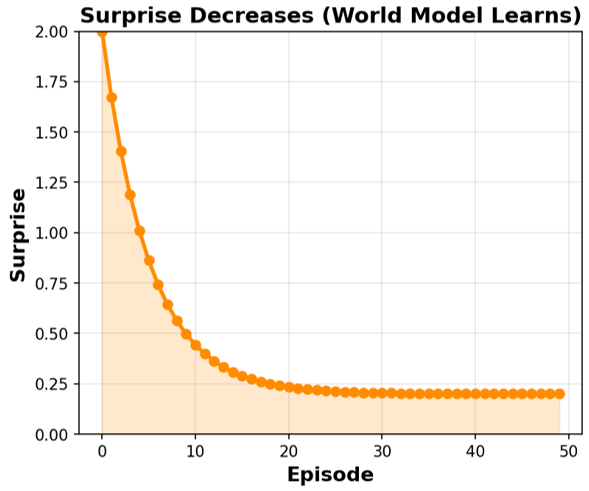}
		\subcaption{} \label{Update_Surprise}
	\end{minipage}
	\hfill
	\begin{minipage}{0.49\linewidth}
		\includegraphics[scale=0.53]{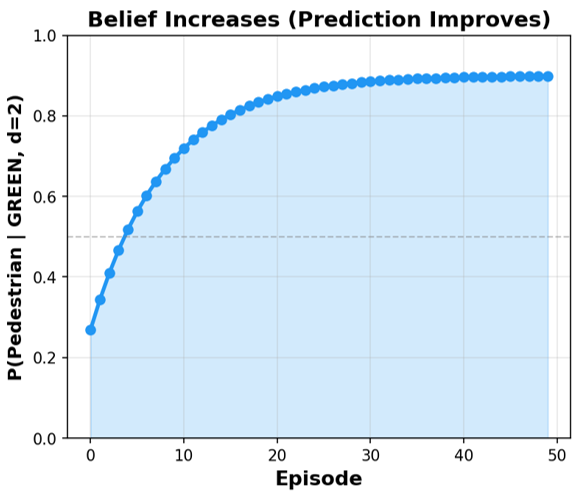}
		\subcaption{} \label{Update_PedProb}
	\end{minipage}
	\vfill
	\begin{minipage}{0.96\linewidth}
		\centering
		\includegraphics[scale=0.53]{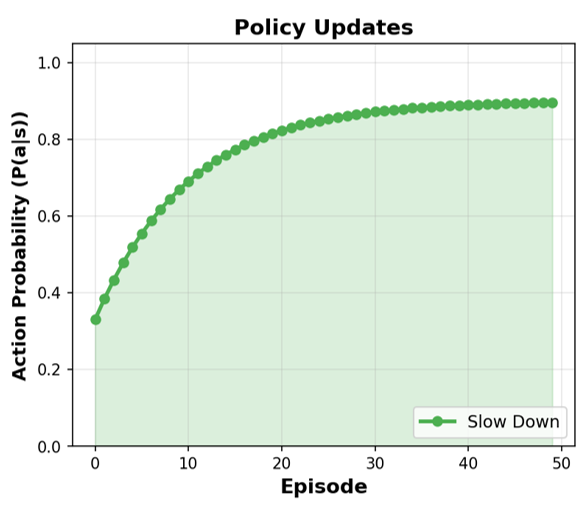}
		\subcaption{} \label{Update_Policy}
	\end{minipage}

	\caption{\small{(a) The change in surprise and update of the (b) world model and (c) policy parameters through inference while encountering the resolved unforeseen scenarios over 50 episodes.}}
	\label{learning_worldmodel_policy}
	\vspace{-0.5cm}
\end{figure}

Fig.~\ref{learning_worldmodel_policy} illustrates the learning and model update dynamics of the test-time scaling approach after encountering the same unforeseen scenario over 50 episodes. 
In Fig.~\ref{Update_Surprise}, the agent initially exhibits a high (non-normalized) surprise of $\mathscr{S} = 1.95$ bits upon first encountering the unforeseen jaywalking scenario at episode 0. Through repeated exposure, the world model beliefs update such that the unforeseen scenario becomes progressively less surprising over episodes. Asymptotically, the surprise $\mathscr{S}$ approaches $\mathscr{S} \approx 0.2$ bits by episode 50. In this case, the agent confidently expects a pedestrian to appear in the situation. Indeed, Fig.~\ref{Update_PedProb} demonstrates the corresponding improvement in pedestrian prediction accuracy, whereby the pedestrian prediction accuracy at green lights $P(\text{pedestrian}=1| L=1, d=2)$ increases from $0.25$ at episode 0 to $0.9$ as it approaches episode 50. Effectively, this indicates successful adaptation of the world model's transition dynamics as the physical world changes in non-stationary conditions.

As the probability of this situation becomes highly expected (i.e., less surprising), the agent must also update to incorporate this situation into its policy. Here, Fig.~\ref{Update_Policy} shows this policy update, where the action probability to slow down increases from $0.3$ to $0.92$ over 50 episodes. This exemplifies the transition of action control from reasoning-based decisions to feed-forward execution as the unforeseen scenario becomes familiar with repeated experience. Henceforth, this test-time scaling approach offers computational efficiency as the unforeseen scenario becomes integrated into the world model, eliminating the need for expensive reasoning, once beliefs converge below surprise threshold. In addition, it enables a \emph{continual learning} solution where the \ac{AI} agent can continue to confidently learn and improve with more experience directly from its world, while limiting risks of exploration and uncertainty.
These findings empirically validate the theoretical contributions introduced in Section~\ref{Learning via Inference}, confirming that learning to minimize prediction errors translates into measurable gains in terms of adaptability, generalization, and computational efficiency at test time.

\section{Conclusion}

In this paper, we have derived a test-time scaling law for physical \ac{AI} agents grounded in the first principle of active inference. By equipping physical \ac{AI} agents with active inference capabilities, these agents become endowed with a general objective to survive in line with their narrow task objectives. Thus, we have shown how this urge to survive can drive physical \ac{AI} agents to generalize at test time through reasoning with their world model. In particular, this reasoning aims to control the surprise of the \ac{AI} agent through resolving prediction errors between the world model and the real world. Thus, we have introduced a surprise per action-value that scales the policy at test time by updating it with this reasoning. We have modeled this update as a soft Bayesian inference with the reasoning being a likelihood.  
This update was shown to capture the biological mechanism that scales the policy in the \ac{BG} with reasoning from the \ac{PFC} at test time. Then, we have introduced a variational inference solution to render the inference problem tractable through free energy minimization. Through an equivalent  statistical physics treatment, we have modeled the process of resolving prediction errors as a gradient descent on the free energy landscape to formalize the scaled posterior policy. Furthermore, we have shown how our framework enables physical \ac{AI} agents to autonomously scale their experience while continuously interacting with the world. In particular, we have showcased how our solution extends learning beyond training by reinforcing new instances, resolved at test time, into both the policy and world model. Simulation results in an autonomous driving task have validated the proposed framework, by outperforming model-free Q-learning and model-based Bayesian \ac{RL}, through achieving robust generalization to unforeseen scenarios while improving inference efficiency by over $36\%$.

\appendices 

\section{Proof of Lemma 1}

    Starting from the non-negativity of the \ac{KL} divergence, we have:
    \begin{align}
    D_{KL}\Big[ q(s_{1:T}, \pi_o) \,\|\, p(s_{1:T}, \pi_o \mid o_{1:t}) \Big] \geq 0 \implies  \mathbb{E}_{q(s_{1:T},  \pi_o)} \Big[ \ln q(s_{1:T}, \pi_o) - \ln p(s_{1:T}, \pi_o \mid o_{1:t}) \Big] \geq 0.
    \label{non-negativity}
    \end{align}
    Using Bayes’ rule, we can expand the posterior:
    \begin{equation}
    \ln p(s_{1:T}, \pi_o \mid o_{1:t})
    = \ln p(o_{1:t}, s_{1:T}, \pi_o) - \ln p(o_{1:t} \mid \pi_o).
    \label{expansion}
    \end{equation}
    Substituting~\eqref{expansion} back in~\eqref{non-negativity} and rearranging the terms, we obtain:
    \begin{equation*}
    \underbrace{\mathbb{E}_{q(s_{1:T}, \pi_o)} \Big[ \ln q(s_{1:T}, \pi_o) - \ln p(o_{1:t}, s_{1:T}, \pi_o) \Big]}_{F[q(s_{1:T}, \pi_o)]} 
    + \ln p(o_{1:t} \mid \pi_o) \geq 0.
    \end{equation*}
    Thus, we reach the standard negative ELBO equation:$ - \ln p(o_{1:t} \mid \pi_o) \leq F[q(s_{1:T}, \pi_o)].$ Considering that the \ac{AI} agent experiences an unforeseen scenario at time $t$ whereby surprise arises, we can simplify the cumulative surprise using a mean field approximation, as follows:
    $$- \ln p(o_{1:t} \mid \pi_o) \approx -\sum_{\tau=1}^t \ln p(o_\tau \mid \pi_o) = \underbrace{-\sum_{\tau=1}^{t-1} \ln p(o_\tau \mid \pi_o)}_{\approx 0} -\ln p(o_t \mid \pi_o)  = -\ln p(o_t \mid \pi_o) = \mathscr{S} (t, \pi_o)$$ 
    Therefore, we conclude that: 
    $ \mathscr{S} (t, \pi_o) \leq F[q(s_{1:T}, \pi_o)].$ Thus, the lemma is proved.

\section{Proof of Proposition 1}
We start from the per-step \ac{EFE} for policy \(\pi_o\):
\begin{equation} 
G(\tau, \pi_o)
= \mathbb{E}_{q(o_\tau,s_\tau\mid\pi_o)}\!\big[ \ln q(s_\tau\mid\pi_o) - \ln p(o_\tau,s_\tau\mid\pi_o) \big].
\end{equation}
Here, the expectation is taken under $q (\cdot \mid \pi_o)$ considering both future outcomes and states given policy \(\pi_o\).
Expanding the joint density using \(p(o_\tau,s_\tau\mid\pi_o)=p(s_\tau\mid o_\tau,\pi_o)\,p(o_\tau\mid\pi_o)\) and separating the outcome prior (preferences) yields:
\begin{align} 
G(\tau, \pi_o)
&= \mathbb{E}_{q(o_\tau,s_\tau\mid\pi_o)}\!\big[ \ln q(s_\tau\mid\pi_o) - \ln p(s_\tau\mid o_\tau,\pi_o) \big] 
    \;-\; \mathbb{E}_{q(o_\tau,s_\tau\mid\pi_o)}\!\big[ \ln p(o_\tau \mid \pi_o) \big].
\end{align}

A common practice when planning in active inference is to approximate the true posterior \(p(s_\tau\mid o_\tau,\pi_o)\) by the variational posterior \(q(s_\tau\mid o_\tau,\pi_o)\). Another modification is expressing the preference prior explicit as \(p(o_\tau\mid C)\), as the model evidence depends on the preferences $C$ encoded within $\mathcal{W}$. Making these replacements (which is the standard approximation used to expose epistemic terms) gives:
\begin{align} 
G(\tau, \pi_o)
&\approx \mathbb{E}_{q(o_\tau,s_\tau\mid\pi_o)}\!\big[ \ln q(s_\tau\mid\pi_o) - \ln q(s_\tau\mid o_\tau,\pi_o) \big]
    \;-\; \mathbb{E}_{q(o_\tau,s_\tau\mid\pi_o)}\!\big[ \ln p(o_\tau\mid C) \big].
    \label{expanded}
\end{align}

After expanding expectations into sums over outcomes and states and using \(q(o_\tau,s_\tau\mid\pi_o)=q(s_\tau\mid\pi_o)\,p(o_\tau\mid s_\tau)\), we rearrange~\eqref{expanded} to bring in \(\ln q(o_\tau\mid\pi_o)\) and obtain:
\begin{align} 
G(\tau, \pi_o)
&= \sum_{o_\tau,s_\tau} p(o_\tau\mid s_\tau)\, q(s_\tau\mid\pi_o)\,
     \Big( \ln \frac{q(s_\tau\mid\pi_o)}{ q(s_\tau\mid o_\tau,\pi_o)} \Big)
     \;-\sum_{o_\tau,s_\tau} p(o_\tau\mid s_\tau)\, q(s_\tau\mid\pi_o)\, \ln p(o_\tau\mid C). \nonumber\\
&= \sum_{o_\tau,s_\tau} p(o_\tau\mid s_\tau)\, q(s_\tau\mid\pi_o)\,
     \Big( \ln \frac{q(o_\tau,s_\tau\mid\pi_o)}{ q(s_\tau\mid o_\tau,\pi_o) p(o_\tau\mid s_\tau)} \Big)
     \;-\sum_{o_\tau,s_\tau} p(o_\tau\mid s_\tau)\, q(s_\tau\mid\pi_o)\, \ln p(o_\tau\mid C). \nonumber\\
 &= \sum_{o_\tau,s_\tau} p(o_\tau\mid s_\tau)\, q(s_\tau\mid\pi_o)\,
     \Big( \ln \frac{q(o_\tau\mid\pi_o)}{p(o_\tau\mid s_\tau)} \Big)
    \;-\sum_{o_\tau,s_\tau} p(o_\tau\mid s_\tau)\, q(s_\tau\mid\pi_o)\, \ln p(o_\tau\mid C). \nonumber\\
 &= \sum_{o_\tau,s_\tau} p(o_\tau\mid s_\tau)\, q(s_\tau\mid\pi_o)\,
     \Big( \ln \frac{q(o_\tau\mid\pi_o)}{p(o_\tau\mid C)} \Big)
    \;-\sum_{o_\tau,s_\tau} p(o_\tau\mid s_\tau)\, q(s_\tau\mid\pi_o)\, \ln  p(o_\tau\mid s_\tau). \nonumber
\end{align} 
Then, we marginalize $s_{\tau}$ from the first term such that  \(  \sum_{s_\tau}q(s_\tau\mid\pi_o)\,p(o_\tau\mid s_\tau) = \sum_{s_\tau}q(o_\tau,s_\tau\mid\pi_o)= q(o_\tau\mid\pi_o)\) and separate the sum in the second term to factorize in terms of the the conditional entropy such that $ H\!\big[ p(o_\tau\mid s_\tau)\big] = -  \sum_{o_\tau} p(o_\tau\mid s_\tau) \ln [p(o_\tau\mid s_\tau)]$ to reach:
\begin{align} 
G(\tau, \pi_o)
&= \sum_{o_\tau} q(o_\tau\mid\pi_o)\, \ln \frac{q(o_\tau\mid\pi_o)}{p(o_\tau\mid C)}\;+\; \sum_{s_\tau} q(s_\tau\mid\pi_o)\, H\!\big[ p(o_\tau\mid s_\tau) \big] \nonumber\\
&= \underbrace{D_{\mathrm{KL}}\!\big[ q(o_\tau\mid\pi_o) \;\|\; p(o_\tau\mid C) \big]}_{risk}
    \;+\; \underbrace{\mathbb{E}_{q(s_\tau\mid\pi_o)}\!\big[ H\!\big( p(o_\tau\mid s_\tau) \big) \big]}_{ambiguity} \nonumber.
\end{align}

\section{Proof of Corollary 1}
Following a similar derivation as in~\eqref{MFA_0}--\eqref{Euler-Lagrange} to find the optimal posterior for states $s_{1:t}$, the optimal posterior for policies $\pi \in \Pi$ can be reached to be:
\begin{equation}
\ln q^{*}(\pi) \;\propto\; \ln p(\pi) + \mathbb{E}_{q(s_{1:T})}\!\big[\ln p(o_{1:T},s_{1:T} \mid \pi)\big].
\label{ln_q*}
\end{equation}
Furthermore, splitting the time horizon at current time instant $t$ and factorizing the joint distribution gives:
\begin{align}
\mathbb{E}_{q(s_{1:T})}\!\big[\ln p(o_{1:T},s_{1:T} \mid \pi)\big]
&=  \mathbb{E}_{q(s_{1:T})}\!\big[ \ln p(o_{1:t},s_{1:t}\mid\pi)\big] + \mathbb{E}_{q(s_{1:T})}\! \big[\ln p(o_{t+1:T},s_{t+1:T}\mid s_{1:t},\pi)  \big]   
\label{Time_Horizon}
\end{align}

Then, we simplify the expectation in the first term as it only depends on $s_{1:t}$. 
Furthermore, averaging over past time instants before $t$ will marginalize $s_{1:t}$ from the second term. Similar to~\eqref{EFE}, we treat $o_{t+1:T}$ as random variables and that can be introduced into the expectation. Thus, \eqref{Time_Horizon} is transformed into: 
\begin{align}
\mathbb{E}_{q(s_{1:T})}\!\big[\ln p(o_{1:T},s_{1:T} \mid \pi)\big]
&= \underbrace{\mathbb{E}_{q(s_{1:t})}[\ln p(o_{1:t},s_{1:t}\mid\pi)]}_{\textrm{past + present}}
+ \underbrace{\mathbb{E}_{q(s_{t+1:T},o_{t+1:T})}[\ln p(o_{t+1:T},s_{t+1:T}\mid\pi)]}_{\textrm{future}}.
\label{Time_Horizon_2}
\end{align}
Here, the first term in~\eqref{Time_Horizon_2} depends on past instants $\tau < t$ when the \ac{AI} agent is committed to policy $\pi_o$ after convergence during training. In addition, we recall that the \ac{VFE} is considered equal for all policies at current time $t$. Hence, we can safely drop the first term in~\eqref{Time_Horizon_2} in the comparison between policies.
Moreover, we can rearrange the \ac{EFE} in~\eqref{EFE} to express the second term in~\eqref{Time_Horizon_2} in terms of the time instants $\tau > t$ as:
\begin{equation}
    \mathbb{E}_{q(s_{t+1:T},o_{t+1:T})}[\ln p(o_{t+1:T},s_{t+1:T}\mid\pi)] = -G(\pi) + \mathbb{E}_{q(s_{t+1:T})}[\ln q(s_{t+1:T})].
    \label{second-term}
\end{equation}
Here, $q(s_{t+1:T})$ is fixed when performing the coordinate-ascent update for $q(\pi)$.
Accordingly, $\mathbb{E}_{q(s_{t+1:T})}[\ln q(s_{t+1:T})]$ is considered to be a constant term as it does not depend on $\pi$ and is later absorbed in the normalization of $q(\pi)$. Therefore, after substituting~\eqref{second-term} back in~\eqref{ln_q*}, we can reach that: 
$\ln q^{*}(\pi) \;\propto\; \ln p(\pi) - G(\pi).$ While noting that the \ac{AI} agent has no prior preferences over the policies $\pi \in \Pi$ nor develops any habitual behavior which can favor an alternative policy over another, then we can assume that $p(\pi)$ is uniform i.e., all policies are equally likely a priori. In this case, we can simplify it further to become $\ln q^{*}(\pi) \;\propto\;  - G(\pi).$

\section{Proof of Lemma 2}

Consider the Fokker--Planck equation governing the density $p(x,t)$ under drift $f(x)$ and diffusion $\Gamma$:
\begin{equation*}
\frac{\partial  p(x,t)}{\partial t} = -\nabla \cdot \left[f(x)p(x,t)\right] + \nabla \cdot \Gamma \nabla p(x,t).
\end{equation*}
At steady state, we have $\dfrac{\partial p(x,t)}{\partial t}  = 0$. This yields the divergence:
\begin{equation}
\nabla \cdot \big[\underbrace{f(x)p^*(x) - \Gamma \nabla p^*(x)}_{J(x)}\big] = 0,
\label{divergence}
\end{equation}
where $J(x) = f(x)p^*(x) - \Gamma \nabla p^*(x)$ is the stationary flux (i.e., probability current) at \ac{NESS}. Dividing $J(x)$ by $ p^*(x)$ and rearranging to isolate $f(x)$ gives:
\begin{equation}
f(x) = \frac{J(x)}{p^*(x)} + \Gamma \nabla \ln p^*(x).
\label{normalized flow}
\end{equation}
Considering both~\eqref{divergence}~and~\eqref{normalized flow}, it is clear that the normalized flux $\frac{J(x)}{p^*(x)}$ is divergence free.
Then, by applying the Helmholtz decomposition, this divergence-free field can be written as a skew-symmetric operator $R$ acting on the gradient of the Lagrangian $L(x)$ of the \ac{NESS} system, i.e., 
\begin{equation}
\frac{J(x)}{p^*(x)} = R \nabla L(x),
\label{normalized flow 2}
\end{equation}
where $R^\top = -R$. By considering that $L(x) = -\ln p^*(x)$ and replacing~\eqref{normalized flow 2} back in~\eqref{normalized flow}, the decomposition reaches its final form as:
\begin{equation*}
f(x) = -\Gamma \nabla L(x) + R \nabla L(x) = (R - \Gamma) \nabla L(x) = (\Gamma - R) \nabla \ln p^{*}(x).
\end{equation*}
Thus, the lemma is proved.




\bibliographystyle{IEEEtran}
\bibliography{bibliography}
\end{document}